\documentclass[accepted]{style} 
\usepackage[american]{babel}

\usepackage{natbib} 
\usepackage{mathtools} 
\usepackage{booktabs} 
\usepackage{tikz} 

\title{Laplace Approximation for Bayesian Tensor Network Kernel Machines}

\author[1]{\href{mailto:<A.Saiapin@tudelft.nl>?Subject=About LA-TNKM paper}{Albert Saiapin}{}}
\author[1]{Kim Batselier}
\affil[1]{%
    Delft Center for Systems and Control\\
    Delft University of Technology
}

\usepackage{booktabs}
\usepackage{graphicx}
\usepackage{caption}
\usepackage{subcaption}

\usepackage[linesnumbered, ruled, vlined]{algorithm2e}

\SetCommentSty{mycommfont}
\SetKwInput{KwInput}{Input}                
\SetKwInput{KwOutput}{Output} 
\SetKwInput{KwParams}{Parameters}  

\usepackage{amsthm}
\usepackage{thmtools} 
\theoremstyle{definition}
\newtheorem{definition}{Definition}[section]
\newtheorem{theorem}{Theorem}[section]
\newtheorem{lemma}{Lemma}[section]

\usepackage{amssymb}
\usepackage{mathabx}

\newcommand{\rBr}[1]{\left(#1\right)}
\newcommand{\sqBr}[1]{\left[#1\right]}
\newcommand{\diagOp}[1]{\operatorname{diag}\left(#1\right)}
\newcommand{\vecOp}[1]{\operatorname{vec}\left(#1\right)}
\newcommand{\tenOp}[1]{\operatorname{ten}\left(#1\right)}

\newcommand{\tens}[1]{\boldsymbol{\mathcal{#1}}} 
\newcommand{\mat}[1]{\boldsymbol{#1}} 
\newcommand{\vect}[1]{\boldsymbol{#1}} 
\newcommand{\inCo}[1]{\in\mathbb{C}^{#1}} 

\begin{document}
\maketitle

\begin{abstract}
Uncertainty estimation is essential for robust decision-making in the presence of ambiguous or out-of-distribution inputs. Gaussian Processes (GPs) are classical kernel-based models that offer principled uncertainty quantification and perform well on small- to medium-scale datasets. 
Alternatively, formulating the weight space learning problem under tensor network assumptions yields scalable tensor network kernel machines.
However, these assumptions break Gaussianity, complicating standard probabilistic inference. This raises a fundamental question: how can tensor network kernel machines provide principled uncertainty estimates?
We propose a novel Bayesian Tensor Network Kernel Machine (LA-TNKM) that employs a (linearized) Laplace approximation for Bayesian inference. A comprehensive set of numerical experiments shows that the proposed method consistently matches or surpasses Gaussian Processes and Bayesian Neural Networks (BNNs) across diverse UCI regression benchmarks, highlighting both its effectiveness and practical relevance.
\end{abstract}

\section{INTRODUCTION} 
The ability of machine learning (ML) systems to discriminate between different objects of interest provides a good value in various fields: from financial~\citep{ML_Finance_Dixon_2020} and medical~\citep{ML_Medicine_Shehab_2022} applications to natural language processing~\citep{NLP_Jang_2024} and recommender systems~\citep{RecSys_Mala_2022}. 
For example, a portable personalized medical assistant could leverage physiological measurements — such as heart rate, age, height, and weight — as input features to generate diagnostic predictions or disease risk assessments. However, in practical applications, a critical component is often overlooked: uncertainty estimation (UE)~\citep{UQ_General_Li_2012, UQ_DL_Abdar_2021}. 
One of the methods to construct uncertainty-aware models is through probabilistic modeling~\citep{PML_Intro_Murphy_2022}, where the goal is to estimate the following predictive distribution: 
\begin{equation}\label{pd_general}
    p(y^*|\vect{x}^*, \mathcal{D}),
\end{equation}
where $\mathcal{D} \coloneq \{\rBr{\vect{x}_i, y_i} | \vect{x}_i \in \mathcal{X}^D, y_i \in \mathcal{Y},  i=1, \dots, N\} = \{\mat{X}\in \mathcal{X}^{N\times D}, \vect{y}\in \mathcal{Y}^{N}\}$ is the training data and $(\vect{x}^*, y^*)$ is a test sample. 
In this case, the model produces not only mean of the distribution (predictions, e.g. cat or dog label) but also variance, reflecting its confidence. High variance indicates uncertainty, arising from limited training data, missing or noisy features, or out-of-distribution inputs. Accounting for this uncertainty enables more robust decision-making and can guide active data acquisition when the model is unsure.

A well-known example of this idea is the Gaussian Process (GP) model~\citep{GP_Rasmussen_2006}, which provides a principled probabilistic framework for kernel-based learning from a function-space perspective.
The main training step of a standard GP model consists of inverting the Gram matrix $k_{ij} \coloneq k(\vect{x}_i, \vect{x}_j)$, which encodes the pairwise relations between all training data points. As a result, the storage complexity is $\mathcal{O}(N^2)$ and the computational complexity is $\mathcal{O}(N^3)$, which limits the applicability of conventional GP models to small- or medium-scale datasets (up to $10^{5}$–$10^{6}$ observations).

On the other hand, an equivalent weight-space formulation of the GP learning problem, $f(\vect{x}) := \vect{\phi}(\vect{x})^\top\vect{w}$ with deterministic features $\vect{\phi}(\vect{x})$, allows the training complexity to be reduced from cubic to linear in the number of data samples $N$, under specific constraints. 
Following~\citep{TN_Schwab_2016, TN_Volterra_Batselier_2017, EM_Novikov_2018}, model weights $\vect{w}$ can be parameterized as a tensor network, thereby learning a multilinear (nonlinear) data-dependent representation from an exponentially large number of fixed features using only a linear number of parameters. However, as multilinearity precludes closed-form Bayesian inference, most of the previous works focus mainly on training a maximum a posteriori (MAP) parameter estimates~\citep{TD_FF_Wesel_2021, FL_Saiapin_2025}. 

In this work, we introduce the Bayesian Tensor Network Kernel Machine (LA-TNKM), which unifies variational inference and (linearized) Laplace approximation within a single framework.
The key idea is to locally approximate the posterior over model parameters with a Gaussian centered at a local maximum, where the covariance captures the local curvature. This approximation enables estimation of the predictive distribution~\eqref{pd_general} and facilitates hyperparameter evaluation via a variational inference framework. Building on this, the main contributions of this work are:
\begin{itemize}
    \item We introduce LA-TNKM, a novel probabilistic tensor network model that enables uncertainty-aware predictions while maintaining a computational cost comparable to that of a standard MAP-based tensor network kernel machine.
    \item We systematically evaluate various Hessian approximation techniques — Full, Generalized Gauss–Newton, Block-Diagonal, Diagonal, and Last Core — within the tensor network framework, highlighting their impact on memory requirements, computational efficiency, and predictive performance.
    \item We demonstrate the competitive performance of LA-TNKM on UCI regression benchmarks, where it consistently matches or outperforms Gaussian Processes, Bayesian Neural Networks, and other tensor network-based regression methods, highlighting its robustness and practical utility.
\end{itemize}

\section{BACKGROUND}
We denote sets with calligraphic capital letters - $\mathcal{Z}$.
Scalars are denoted in italics $w, W$, vectors in lowercase bold $\vect{w}$, indexed vectors as $\vect{w}_s$ or $\vect{w}_r^{(d)}$, matrices in capital bold $\mat{W}$ and tensors, also being high-order arrays, in capital italic bold font $\tens{W}$. The $i$th entry of a vector $\vect{w} \inCo{I}$ is denoted as $w_i$ and the $i_1i_2\dots i_D$th entry of a $D$th-order tensor $\tens{W} \inCo{I_1 \times I_2 \times \dots \times I_D}$ as $w_{i_1 i_2 \dots i_D}$. 
The conjugate-transpose of $\mat{A}$ is denoted as $\mat{A}^\top$ 
and $\otimes$, $\odot_R$, $\oasterisk$ represent the Kronecker product, row-wise Khatri-Rao product, Hadamard product correspondingly~\citep{TN1_Cichocki_2016}. The $\diagOp{\cdot}$ operator returns a diagonal matrix when applied to a vector. We employ zero-based indexing for all tensors.
Since working with vectors and matrices is more straightforward than with tensors, we introduce the vectorization operation.
\begin{definition}[Vectorization]\label{def_vec}
    The vectorization operator $\vecOp{\cdot}: \mathbb{C}^{I_1 \times I_2 \times \dots \times I_D} \rightarrow \mathbb{C}^{I_1I_2\dots I_D}$ is defined as:
    \begin{equation*}
        \vecOp{\tens{W}}_i \coloneq w_{i_1 i_2 \dots i_D},
    \end{equation*}
    where $i \coloneq \rBr{i_1i_2\dots i_D}= i_1 + \sum_{d=2}^{D}i_d \prod_{j=1}^{d-1}I_j$.
\end{definition}
Its inverse, the tensorization operator, is further denoted as $\tenOp{\vect{w}, I_1, I_2, \dots, I_D}$.

\subsection{Tensor Networks}
The fundamental idea underlying tensor networks (TNs)~\citep{TD_Kolda_2009, TD_Cichocki_2014} is to efficiently represent and manipulate high-dimensional tensors $\tens{W}$ by decomposing them into a network of smaller, low-rank tensors (cores) connected by shared indices. 
In this work, we focus on the canonical polyadic decomposition.
\begin{definition}[Canonical polyadic decomposition (CPD)~\citep{TD_Kolda_2009}]\label{CPD_Kron}
    A $D$th-order tensor $\tens{W} \in \mathbb{C}^{I_1 \times I_2 \times \dots \times I_D}$ has a rank-$R$ CPD if
\begin{equation}
    \vecOp{\tens{W}} \coloneq \sum_{r=1}^{R} \vect{v}^{(D)}_{r} \otimes \dots \otimes \vect{v}^{(1)}_{r},
\end{equation}  
where $\mat{V}^{(d)} \coloneq \sqBr{\vect{v}^{(d)}_{1}, \dots, \vect{v}^{(d)}_{R}} \in \mathbb{C}^{I_d \times R}$ are called the cores of the CPD.
\end{definition}
From Definition~\ref{CPD_Kron}, the storage complexity of a $D$th-order tensor can be reduced from exponential $\mathcal{O}(I^D)$ to linear $\mathcal{O}(DIR)$ that is beneficial when $D$ is large and $I \coloneq \max(I_1, \dots, I_D)$. 
An additional advantage of the CPD is its simplicity, as it is governed by a single hyperparameter — the scalar rank-$R$. In contrast, Tensor Train (TT)~\citep{TT_Oseledets_2011} and Hierarchical Tucker (HT)~\citep{HT_Grasedyck_2010} decompositions involve multiple rank parameters, making hyperparameter tuning more complex. 

\subsection{Tensor Network Kernel Machine}
Consider the following model of linear regression:
\begin{equation}\label{intro_simple_model}
    f(\vect{x}) = \vect{\phi}(\vect{x})^\top\vect{w},
\end{equation}
where $\vect{\phi}(\vect{x}) \in\mathbb{C}^{I_1 I_2 \dots I_D}$ represents nonlinear features,  $\vect{w}\in\mathbb{C}^{I_1 I_2 \dots I_D}$ is a vector of weights learned from the data $\mathcal{D}$.
The feature map $\vect{\phi}(\vect{x})$ is important as it allows for modeling various nonlinear behaviors in the data.
In addition, in the context of tensor network kernel machines two assumptions are made:
\begin{enumerate}
    \item Features $\vect{\phi}(\vect{x})$ have tensor-product structure: \begin{equation}\label{tensor_product_features}
    \vect{\phi}(\vect{x}) \coloneq \vect{\phi}^{(D)}(x_D) \otimes \dots \otimes \vect{\phi}^{(1)}(x_1),
    \end{equation}
    where $\vect{\phi}^{(d)}: \mathbb{C} \rightarrow \mathbb{C}^{I_d}$ is a feature map acting on the $d$th component of $\vect{x} \inCo{D}$.
    \item Model weights $\vect{w}$ are represented as a tensor network.
\end{enumerate}
The tensor-product structure~\eqref{tensor_product_features} is directly related to product kernels \citep{VFF_Hensman_2017, GPRR_Solin_2019}, Fourier features~\citep{TT_FF_Wahls_2014} and polynomials~\citep{KM__Cristianini_2004}. At first glance, a disadvantage of the tensor-product structure in Equation~\eqref{tensor_product_features} is that the input vector $\vect{x}$ is mapped into an exponentially large feature vector $\vect{\phi}(\vect{x}) \inCo{I_1 I_2 \dots I_D}$. As a result, the model is described by an exponential number of model parameters $\vect{w}$. 
This motivates the introduction of the second assumption. For the CPD, the following theorem holds.
\begin{theorem}[CPD kernel machine~\citep{QTNM_Wesel_2024}]
\label{CPD_km}
    \textit{Suppose} $\tenOp{\vect{w}, I_1, I_2, \dots, I_D}$ \textit{is a tensor in CPD form. The model responses and associated gradients of}
    \begin{equation*}
        f(\vect{x}) = \sqBr{\vect{\phi}^{(D)}(x_D) \otimes \dots \otimes \vect{\phi}^{(1)}(x_1)}^\top \vect{w}
    \end{equation*}
    \textit{can be computed in} $\mathcal{O}(DIR)$ \textit{rather than} $\mathcal{O}(\prod_{d=1}^D I_d)$\textit{, where} $I = \max(I_1, \dots, I_D)$.
\end{theorem}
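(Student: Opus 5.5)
The plan is to substitute the CPD representation of Definition~\ref{CPD_Kron} into the model and use the mixed-product property of the Kronecker product, $(\mat{A}\otimes\mat{B})^\top(\mat{C}\otimes\mat{D}) = (\mat{A}^\top\mat{C})\otimes(\mat{B}^\top\mat{D})$. Taking $\vect{w}=\sum_{r=1}^R \vect{v}^{(D)}_r\otimes\dots\otimes\vect{v}^{(1)}_r$ and applying this property factor by factor, every inner product of Kronecker products collapses into a product of scalars:
\begin{equation*}
f(\vect{x}) = \sum_{r=1}^{R}\prod_{d=1}^{D} \vect{\phi}^{(d)}(x_d)^\top \vect{v}^{(d)}_r .
\end{equation*}
One point needs care: the orderings in \eqref{tensor_product_features} and Definition~\ref{CPD_Kron} match, from $D$ down to $1$, so the $d$th feature factor pairs with the $d$th core. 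Stacking the scalars as the row vectors $\vect{z}^{(d)} \coloneq \vect{\phi}^{(d)}(x_d)^\top\mat{V}^{(d)}\in\mathbb{C}^{1\times R}$ gives $f(\vect{x}) = (\vect{z}^{(1)}\oasterisk\dots\oasterisk\vect{z}^{(D)})\vect{1}_R$.

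Next comes the cost count for the response. Each $\vect{z}^{(d)}$ is a vector--matrix product with cost $\mathcal{O}(I_dR)\le\mathcal{O}(IR)$, so all $D$ of them cost $\mathcal{O}(DIR)$. The $D-1$ Hadamard products and the final sum over $r$ add $\mathcal{O}(DR)$. The exponentially long vector $\vect{\phi}(\vect{x})$ is never formed, so the total is $\mathcal{O}(DIR)$ instead of $\mathcal{O}(\prod_d I_d)$.

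For the gradients, the model is linear in each individual core. With all cores except $\mat{V}^{(d)}$ held fixed,
\begin{equation*}
f(\vect{x}) = \vect{\phi}^{(d)}(x_d)^\top\mat{V}^{(d)}\vect{g}^{(d)},\qquad \vect{g}^{(d)}\coloneq\Big(\mathop{\oasterisk}_{k\neq d}\vect{z}^{(k)}\Big)^\top,
\end{equation*}
and therefore $\partial f/\partial \mat{V}^{(d)} = \vect{\phi}^{(d)}(x_d)\,(\vect{g}^{(d)})^\top$, an outer product costing $\mathcal{O}(I_dR)$ once $\vect{g}^{(d)}$ is available.

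The main obstacle is computing all the leave-one-out Hadamard products $\vect{g}^{(d)}$ for every $d$ without paying $\mathcal{O}(D^2R)$. Division by $\vect{z}^{(d)}$ could fail because its entries may be zero. I would therefore use prefix and suffix products instead: $\vect{p}^{(d)}=\vect{z}^{(1)}\oasterisk\dots\oasterisk\vect{z}^{(d-1)}$ and $\vect{s}^{(d)}=\vect{z}^{(d+1)}\oasterisk\dots\oasterisk\vect{z}^{(D)}$. Both families come from two sweeps costing $\mathcal{O}(DR)$, and then $\vect{g}^{(d)}=(\vect{p}^{(d)}\oasterisk\vect{s}^{(d)})^\top$. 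Summing the $D$ outer products gives $\mathcal{O}(DIR)$ for the full gradient, which completes the argument.
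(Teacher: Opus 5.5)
Your proposal is correct and matches what the paper relies on. The paper does not prove this theorem itself; it imports it from Wesel and Batselier. Its Lemma~\ref{lemma_1}, however, derives the same factorization $\vect{f}=\mat{Z}\vect{1}=\mat{A}^{(d)}\vect{v}^{(d)}$, obtained by index-level expansion rather than the Kronecker mixed-product rule. The per-sample rows of that factorization are exactly your $(\vect{z}^{(1)}\oasterisk\dots\oasterisk\vect{z}^{(D)})\vect{1}_R$ and the vectorization $\vect{g}^{(d)}\otimes\vect{\phi}^{(d)}(x_d)$ of your outer-product gradient.

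Your prefix/suffix sweep is a step the paper never spells out. It is what secures the $\mathcal{O}(DIR)$ bound for the gradient with respect to all cores at once, since naive leave-one-out products cost $\mathcal{O}(D^2R)$, and it does so without dividing by possibly zero entries.
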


Training a CPD kernel machine means solving the following nonlinear nonconvex optimization problem:
\begin{equation}\label{cpd_km_opt}
    \begin{split}
        &\min_{\vect{v}}\frac{1}{2} \|\vect{y} - \mat{\Phi}\vect{w}(\vect{v})\|_2^2 + \frac{\alpha}{2}\|\vect{w}(\vect{v})\|_2^2 \\
        & \text{s.t. }\vect{w}(\vect{v}) = \sum_{r=1}^{R} \vect{v}^{(D)}_{r} \otimes \dots \otimes \vect{v}^{(1)}_{r},
    \end{split} 
\end{equation}
where $\vect{y}$ is the target variable vector, $\Phi_{ni} \coloneq \phi_i(\vect{x}_n)$, $\vect{x}_n$ represents an $n$th row of data matrix $\mat{X} \inCo{N\times D}$, $\vect{v}^{(d)}_{r}$ is the $r$th column of the $d$th CPD core $\mat{V}^{(d)}$, $\vect{v}$ denotes the vector containing all entries of the CPD cores, and $\alpha$ is a regularization hyperparameter.
Note that, to compute the model response $\vect{f} \coloneq \mat{\Phi} \vect{w}$, we implicitly take its real part, as the features $\vect{\phi}$ and weights $\vect{w}$ are generally complex.

Common approaches for optimizing tensor network  models include specialized algorithms such as alternating least squares (ALS)~\citep{ ALS_Uschmajew_2012, TD_FF_Wesel_2021} and Riemannian optimization~\citep{EM_Novikov_2018}, which exploit the multilinear tensor structure for efficient convergence. Generic first- or second-order gradient-based methods can also be employed.

\subsection{Bayesian Inference}
Bayesian inference models uncertainty over parameters using probability distributions via Bayes’ rule~\citep{PML_Intro_Murphy_2022}:
\begin{equation}\label{bayes}
    p(\vect{v} | \mathcal{D}) = \dfrac{p(\mathcal{D}|\vect{v})p(\vect{v})}{p(\mathcal{D})},
\end{equation}
where $p(\vect{v})$ is the prior, $p(\vect{v} | \mathcal{D})$ is the posterior, reflecting knowledge before and after observing the data, respectively; $p(\mathcal{D}|\vect{v})$ is the likelihood of the data given the parameters; $p(\mathcal{D})$ is the marginal likelihood, obtained by integrating out $\vect{v}$.
The posterior predictive distribution~\eqref{pd_general} can then be  obtained by marginalizing over parameters:
\begin{equation}\label{bma}
    p(y^*|\vect{x}^*, \mathcal{D}) = \int p(y^*|\vect{x}^*, \vect{v}) p(\vect{v} | \mathcal{D}) d\vect{v},
\end{equation}
where $p(y^*|\vect{x}^*, \vect{v})$ is the conditional model of outputs.
In the next section, we describe the assumptions and techniques enabling tractable Bayesian inference for tensor network kernel machines.

\section{BAYESIAN TENSOR NETWORK KERNEL MACHINES}
We consider the following discriminative model:
\begin{equation}\label{y_model}
    y = f(\vect{x}, \vect{v}) + e = \vect{\phi}(\vect{x})^\top\vect{g}(\vect{v}) + e,
\end{equation}
where $\vect{g}(\cdot): \mathbb{C}^{DIR} \rightarrow \mathbb{C}^{I^D}$ is the CPD parameterization mapping with $I \coloneq I_d$ for all $d=1\dots D$; $\vect{v} \coloneq \vecOp{\left[\vect{v}^{(1)}, \dots, \vect{v}^{(D)}\right]} \inCo{DIR}$, with $\vect{v}^{(d)} \coloneq \vecOp{\mat{V}^{(d)}} \inCo{IR}$; and 
$e \sim \mathcal{N}(0,\,\beta^{-1})$ is a Gaussian noise term with precision $\beta$.
Based on the model formulation in Equation~\eqref{y_model}, the conditional distribution of the outputs is given by:
\begin{equation}\label{cond_model}
    p(y|\vect{x}, \vect{v}, \beta) = \mathcal{N}(\vect{\phi}(\vect{x})^\top\vect{g}(\vect{v}),\,\beta^{-1}).
\end{equation}
Another key component of Bayesian inference is the prior distribution, which encodes initial beliefs and influences model uncertainty. Here, we define it as:
\begin{equation}\label{prior_weights}
    p(\vect{v}|\gamma) \coloneq \mathcal{N}(\vect{0},\,\gamma^{-1}\mat{I}),
\end{equation}
where $\gamma$ denotes the precision hyperparameter.
For tractable Bayesian inference, we introduce a variational posterior under the mean-field assumption:
\begin{equation}\label{q_var}
q(\vect{v}, \beta, \gamma) = q(\vect{v})q(\beta)q(\gamma),
\end{equation}
which allows efficient approximation of the true posterior. 
Details on variational inference and the hyperprior are provided in Section~\ref{vi_hyperprior} of the Appendix.

\subsection{Tensor Network Parameters Posterior}
A central element of Bayesian inference is the posterior distribution over model parameters~\eqref{bayes}, which captures both the model’s capacity and its uncertainty. Computing the posterior predictive distribution~\eqref{bma} requires evaluating a high-dimensional integral, which is often intractable. 
To this end, we employ the Laplace approximation~\citep{ML_Bishop_2006} to obtain a tractable posterior estimate, approximating $p(\vect{v} | \mathcal{D})$ by $q(\vect{v})$ as:
\begin{equation}\label{laplace}
    \begin{split}
        q(\vect{v}) &= \mathcal{N}(\vect{v}|\vect{v}^*,\,\mat{H}_{\vect{v}^*}^{-1}),\\
        \vect{v}^* &\coloneq \arg\min_{\vect{v}}  \mathcal{J}(\vect{v}),\\
        \mat{H}_{\vect{v}^*} &\coloneq \sqBr{\frac{\partial^2{ \mathcal{J}}}{\partial{\vect{v}}^\top \partial{\vect{v}}}}_{\vect{v} = \vect{v}^*}, \\
    \end{split}
\end{equation}
using the loss function $\mathcal{J}$, defined as follows:
\begin{equation}\label{loss}
    \mathcal{J}(\vect{v}) \coloneq \dfrac{\beta}{2} \|\vect{y} -\mat{\Phi}\vect{g}(\vect{v})\|_2^2 + \dfrac{\gamma}{2}\|\vect{v}\|_2^2,
\end{equation}
where $\beta$ and $\gamma$ denote the precision hyperparameters.
The optimal model parameters $\vect{v}^*$ in the minimization problem~\eqref{laplace} can be found using the ALS algorithm~\citep{TD_FF_Wesel_2021}, where each CPD core $\mat{V}^{(d)}$ is updated sequentially for $d=1, \dots, D$ while keeping the other cores fixed:
\begin{equation}\label{wk_update_specs}
    \begin{split}
        \vecOp{\mat{V}^{(d)}} &= \rBr{\mat{A}^{(d)\top} \mat{A}^{(d)} + \dfrac{\gamma}{\beta} \mat{I}}^{-1}\mat{A}^{(d)\top}\vect{y}, \\
        \mat{A}^{(d)} &\coloneq \rBr{\mat{Z}^{(d)} \odot_R \mat{\Phi}^{(d)}} \inCo{N \times IR}, \\
        \mat{Z}^{(d)} &\coloneq \bigoasterisk_{\substack{k=1 \\ k\neq d}}^D \mat{\Phi}^{(k)} \mat{V}^{(k)} \inCo{N \times R}, \\
        \Phi^{(d)}_{ni} &\coloneq \vect{\phi}_{i}^{(d)}(x_{nd}).
    \end{split}  
\end{equation}
This procedure is repeated for several epochs (sweeps).

\subsection{Hessian Matrix Approximations}
A critical and technically challenging aspect of the Laplace approximation is the computation of the Hessian matrix, $\mat{H}_{\vect{v}^*}$, as defined in Equation~\eqref{laplace}. 
Building on recent advances in Hessian estimation for Bayesian Neural Networks (BNNs)~\citep{Last_Bayesian_Kristiadi_2020, Laplace_Redux_Daxberger_2021, FSP_Laplace_Bamler_2025}, we classify and compare several types of Hessian approximations within the tensor network modeling framework - Full, Generalized Gauss–Newton (GGN), Block-Diagonal (Block), Diagonal (Diag), and Last Core (Last) - in terms of their memory usage and training complexity.

\begin{table*}[!ht]
    \centering
    \caption{Comparison of memory and computational complexities across various Hessian approximation methods. For reference, the complexity of MAP estimate training is also provided. The notation used is as follows: $M$ denotes memory required to store parameters; $M_{\text{peak}}$ denotes peak memory usage during training; $T$ represents training complexity; $E$ is the number of ALS epochs; $N$ is the sample size; $D$ is the data dimensionality; $I$ is the number of basis functions per dimension; and $R$ is the CPD rank. We assume $N \gg IR$.}
    \label{table:hess_complexity}
    \begin{center}
    \scalebox{1.0}{\begin{tabular}{||l||c|c|c|c|c||}
    \toprule
     & MAP - $\vect{v}^*$ & Last & Diag & Block & GGN/Full\\
    \midrule
    \midrule
    $M$ & $\mathcal{O}(DIR)$ & $\mathcal{O}(I^2R^2)$ & $\mathcal{O}(DIR)$ & $\mathcal{O}(DI^2R^2)$ &  $\mathcal{O}((DIR)^2)$ \\
    $M_{\text{peak}}$ & $\mathcal{O}((N + D)IR)$ & $\mathcal{O}(NIR)$ & $\mathcal{O}((N+D)IR)$ & $\mathcal{O}(NIR + DI^2R^2)$ & $\mathcal{O}(NIR + (DIR)^2)$ \\
    $T$ & $\mathcal{O}(ENDI^2R^2)$ & $\mathcal{O}(NI^2R^2)$ & $\mathcal{O}(NDIR)$ & $\mathcal{O}(NDI^2R^2)$ & $\mathcal{O}(N(DIR)^2)$ \\
    \bottomrule
    \end{tabular}}
    \end{center}
\end{table*}

\paragraph{Full Hessian.}
Leveraging the multilinearity of the CPD kernel machine, we obtain the full Hessian, stated in the theorem below.
\begin{restatable}{theorem}{MainTheorem}
\label{full_hess}
\textit{The Hessian of the CPD kernel machine loss function} $\mathcal{J}(\vect{v})$~\eqref{loss} \textit{can be written in block-matrix form for} $k, m = 1, \dots, D$ \textit{and} $r, p = 1, \dots, R$ \textit{as follows:}
\begin{equation*}
\begin{cases}
    &\dfrac{\partial^2 \mathcal{J}}{\partial \rBr{\vect{v}^{(k)}}^\top \partial \vect{v}^{(k)}} = \beta\mat{A}^{(k)\top}\mat{A}^{(k)} + \gamma\mat{I},\\
    &\dfrac{\partial^2 \mathcal{J}}{\partial \rBr{\vect{v}^{(m)}_p}^\top \partial \vect{v}^{(k)}_r} = \beta \mat{\Phi}^{(k)\top} \mat{T}^{(k, m, r, p)} \mat{\Phi}^{(m)},
\end{cases}
\end{equation*}
\textit{with the intermediate matrices and using the Kronecker delta, $\delta_{ij}$:}
\begin{equation*}
    \begin{split}
        \mat{T}^{(k, m, r, p)} &\coloneq \mat{D}^{(k, m, r, p)} + \delta_{rp} \mat{E}^{(k, m, r)},\\
        \mat{D}^{(k, m, r, p)} &\coloneq 
        \diagOp{\sqBr{\mat{Z}^{(m)} \odot_R \mat{Z}^{(k)}}_{:(rp)}},\\
        \mat{E}^{(k, m, r)} &\coloneq
        \diagOp{\vect{f} - \vect{y}}
        \diagOp{\sqBr{\mat{Z}^{(k, m)}}_{:r}},\\
        \mat{Z}^{(k, m)} &\coloneq \bigoasterisk_{\substack{d=1 \\ d\neq k \\ d\neq m}}^D \mat{\Phi}^{(d)}\mat{V}^{(d)}\text{.}
    \end{split}
\end{equation*}
\end{restatable}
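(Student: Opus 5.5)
Since $\mathcal{J}$ is a quadratic in the model response $\vect{f} = \mat{\Phi}\vect{g}(\vect{v})$ plus a quadratic regularizer, I would get the Hessian from the chain rule in the form
\begin{equation*}
\nabla^2 \mathcal{J} = \beta\, \mat{J}^\top \mat{J} + \beta \sum_{n=1}^N (f_n - y_n)\, \nabla^2 f_n + \gamma \mat{I},
\end{equation*}
where $\mat{J} = \partial \vect{f}/\partial \vect{v}$ is the Jacobian. I treat everything as real, taking real parts as the paper does. The proof then reduces to computing first and second derivatives of $f_n$ with respect to the CPD columns, and the multilinearity of $\vect{g}$ makes both explicit.

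The first step is to write the response in the factored form. By the mixed-product property of Kronecker products applied to the tensor-product features and the CPD (the structure behind Theorem~\ref{CPD_km}),
\begin{equation*}
f_n = \sum_{r=1}^R \prod_{d=1}^D \vect{\phi}^{(d)}(x_{nd})^\top \vect{v}^{(d)}_r ,
\end{equation*}
that is, $\vect{f} = \sum_r \bigl[\bigoasterisk_d \mat{\Phi}^{(d)}\mat{V}^{(d)}\bigr]_{:r}$.

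Differentiating with respect to $\vect{v}^{(k)}_r$ gives $\partial f_n/\partial \vect{v}^{(k)}_r = Z^{(k)}_{nr}\, \vect{\phi}^{(k)}(x_{nk})$. Stacking over $r$ recovers $\partial\vect{f}/\partial\vect{v}^{(k)} = \mat{A}^{(k)} = \mat{Z}^{(k)} \odot_R \mat{\Phi}^{(k)}$, where the column ordering must be matched with $\vecOp{\mat{V}^{(k)}}$. Consequently the $(k,m)$ Gauss--Newton block is $\beta \mat{A}^{(k)\top}\mat{A}^{(m)}$. Its $(r,p)$ sub-block is
\begin{equation*}
\beta \sum_n Z^{(k)}_{nr} Z^{(m)}_{np}\, \vect{\phi}^{(k)}(x_{nk}) \vect{\phi}^{(m)}(x_{nm})^\top = \beta\, \mat{\Phi}^{(k)\top}\mat{D}^{(k,m,r,p)}\mat{\Phi}^{(m)},
\end{equation*}
since the row-wise Khatri--Rao product $\mat{Z}^{(m)}\odot_R\mat{Z}^{(k)}$ has column $(rp)$ with entries $Z^{(k)}_{nr}Z^{(m)}_{np}$.

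The second-order term is where the case split appears.
\begin{itemize}
\item For the same core ($k=m$), $f_n$ is linear in $\vect{v}^{(k)}$ because each summand contains $\vect{v}^{(k)}_r$ only once. So $\nabla^2 f_n$ vanishes on the diagonal block, which yields $\beta\mat{A}^{(k)\top}\mat{A}^{(k)}+\gamma\mat{I}$. The regularizer contributes $\gamma\mat{I}$ only on these diagonal blocks.
\item For $k\neq m$, the mixed derivative $\partial^2 f_n/\partial \vect{v}^{(k)}_r \partial (\vect{v}^{(m)}_p)^\top$ is nonzero only when $r=p$, since different rank terms are decoupled. In that case it equals $Z^{(k,m)}_{nr}\,\vect{\phi}^{(k)}(x_{nk})\vect{\phi}^{(m)}(x_{nm})^\top$. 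Weighting by $(f_n-y_n)$ and summing over $n$ gives $\beta\,\delta_{rp}\mat{\Phi}^{(k)\top}\mat{E}^{(k,m,r)}\mat{\Phi}^{(m)}$.
\end{itemize}
Adding the two contributions produces $\mat{T}^{(k,m,r,p)}$ as stated.

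The main obstacle is bookkeeping rather than anything conceptual. I need to check that the ordering convention of $\vecOp{\cdot}$ and of $\odot_R$ matches the block ordering of $\vect{v}^{(k)}$ and the $(rp)$ index in $\mat{D}$. I also need to handle the sign convention $\vect{f}-\vect{y}$ correctly: the residual term is $-(y_n - f_n)\nabla^2 f_n = (f_n-y_n)\nabla^2 f_n$. I would verify both points by writing everything entrywise in $n$ first and only then re-expressing in matrix form.
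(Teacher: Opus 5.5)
Your proof is correct, but it is organized differently from the paper's. The paper never uses the generic least-squares split $\beta\mat{J}^\top\mat{J} + \beta\sum_n (f_n-y_n)\nabla^2 f_n$.

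\textbf{The paper's route.} It starts from the block gradient $\beta\bigl(\mat{A}^{(k)\top}\mat{A}^{(k)}\vect{v}^{(k)} - \mat{A}^{(k)\top}\vect{y}\bigr) + \gamma\vect{v}^{(k)}$. This is valid because, by Lemma~\ref{lemma_1}, $\vect{f} = \mat{A}^{(k)}\vect{v}^{(k)}$ with $\mat{A}^{(k)}$ independent of $\vect{v}^{(k)}$. It then rewrites $\mat{A}^{(k)} = \bigl[\mat{Z}^{(k,m)}\oasterisk\mat{\Phi}^{(m)}\mat{V}^{(m)}\bigr]\odot_R\mat{\Phi}^{(k)}$ to expose the dependence on $\vect{v}^{(m)}$. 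Both terms are differentiated with two index-level lemmas for $\mat{F}^\top\mat{F}\vect{z}$ and $\mat{F}^\top\vect{z}$, where $\mat{F} = [\mat{B}\oasterisk\mat{C}\mat{V}]\odot_R\mat{A}$. Differentiating the inner $\mat{A}^{(k)}$ gives $\mat{D}$. Differentiating $\mat{A}^{(k)\top}$ gives a term with $\diagOp{\vect{f}}$, the data term gives one with $\diagOp{\vect{y}}$, and their difference is $\mat{E}$.

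\textbf{What your route buys.} You get the same two pieces in one step from the Gauss--Newton identity and the explicit multilinear form $f_n = \sum_r\prod_d \vect{\phi}^{(d)}(x_{nd})^\top\vect{v}^{(d)}_r$, so no Khatri--Rao derivative lemmas are needed. It also shows that the $\mat{D}$ part is exactly the GGN block $\beta\mat{A}^{(k)\top}\mat{A}^{(m)}$ of Eq.~\eqref{ggn_hess}. The $\mat{E}$ part is the residual-weighted curvature of $f$, which is the source of indefiniteness.

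Your case split also makes explicit that the second formula is meant for $k\neq m$, as the paper's proof implicitly assumes. Read literally at $k=m$, it would carry a spurious $\mat{E}$ term and no $\gamma\mat{I}$. Your observation that $f_n$ is linear in $\vect{v}^{(k)}$ explains why the diagonal blocks have no residual term.

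\textbf{What the paper's route buys.} It stays inside the ALS normal-equation notation. Its lemmas on differentiating structured Khatri--Rao products could be reused elsewhere.
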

\begin{proof}
See Appendix~\ref{hessian_proofs}.
\end{proof}

\paragraph{GGN Hessian.}
The primary advantage of the generalized Gauss–Newton approximation lies in its positive semi-definiteness, in contrast to the full Hessian, which may contain both positive and negative eigenvalues~\citep{BNN_LLA_Immer_2021}. The formula for this approximation is given by:
\begin{equation}\label{ggn_hess}
    \mat{H}_{\vect{v}^*} = \beta\mat{A}^\top\mat{A} + \gamma\mat{I},
\end{equation}
where $\mat{A} \coloneq \sqBr{\mat{A}^{(1)}, \dots, \mat{A}^{(D)}} \inCo{N \times DIR}$ and  matrices $\mat{A}^{(d)}$ are defined in Equation~\eqref{wk_update_specs}. 

\paragraph{Block-Diagonal Hessian.}
The use of block-diagonal Hessian approximations has been effective in BNNs literature~\citep{KFAC_Martens_2015, GNN_DL_Botev_2017}. The key assumption is that the CPD cores (analogous to separate layers in BNNs) are independent, allowing the posterior to be expressed as:
\begin{equation}
    \begin{split}
        q(\vect{v}) &= \prod_{d=1}^D \mathcal{N}(\vect{v}^{(d)}|(\vect{v}^{(d)})^*,\,(\mat{H}^{(d)})^{-1}), \\
        \mat{H}^{(d)} &= \beta\mat{A}^{(d)\top}\mat{A}^{(d)} + \gamma\mat{I}.
    \end{split}
\end{equation}

\paragraph{Diagonal Hessian.}
In this case, we assume that all model weights are independent (mean-field approach~\citep{SP_BTN_Mandic_2022}), so the posterior is approximated by a diagonal multivariate Gaussian:
\begin{equation}
    \begin{split}
        q(\vect{v}) &= \prod_{d=1}^D \mathcal{N}(\vect{v}^{(d)}|(\vect{v}^{(d)})^*,\,(\mat{H}^{(d)})^{-1}), \\
        \mat{H}^{(d)} &= \diagOp{\beta\sqBr{\mat{A}^{(d)}\oasterisk\mat{A}^{(d)}}^\top\vect{1} + \gamma\vect{1}},
    \end{split}
\end{equation}
where $\vect{1}$ represents a vector of ones.

\paragraph{Last Core Hessian.}
Bayesian inference applied only to the last layer of a neural network was proposed by~\citet{Last_Bayesian_Kristiadi_2020} and shown to be both theoretically well-founded and empirically effective. In our setting, this corresponds to the last CPD core, $\vect{v}^{(D)} = \vecOp{\mat{V}^{(D)}}$, with the posterior over the weights given by
\begin{equation}\label{eq_last_core}
    \begin{split}
        q(\vect{v}) &= \hat{q}(\vect{v}) \mathcal{N}(\vect{v}^{(D)}|(\vect{v}^{(D)})^*,\,(\mat{H}^{(D)})^{-1}), \\
        \mat{H}^{(D)} &= \beta\mat{A}^{(D)\top}\mat{A}^{(D)} + \gamma\mat{I},
    \end{split}
\end{equation}
where $\hat{q}(\vect{v}) = \rBr{\prod_{d=1}^{D-1}\delta(\vect{v}^{(d)} - (\vect{v}^{(d)})^*)}$ and $\delta$ denotes the Dirac delta function.

Table~\ref{table:hess_complexity} compares different Hessian approximation methods in terms of three components: memory $M$ required to store model parameters (mean and covariance), peak memory $M_{\text{peak}}$ during training, and training complexity $T$ in number of operations. For completeness, we also include the complexity of the ALS procedure~\eqref{wk_update_specs}.
In summary, to achieve the same computational complexity as MAP estimation, one should use the Last, Diagonal, or Block approximations. 
For high-dimensional data ($D \gg 1$), the last-core Hessian requires the least memory. In Section~\ref{experiments}, we empirically compare these approximation strategies by their impact on predictive performance.

\paragraph{Eigenvalue Decomposition}
Another challenge is the inversion of the Hessian matrix. Prior work~\citep{SWAG_Maddox_2019} shows that the Hessian is often poorly conditioned, with many eigenvalues being zero or negative. Consequently, the optimization problem in Equation~\eqref{laplace} is ill-conditioned and difficult to solve. 
To address this, we apply a truncated eigenvalue decomposition to the Hessian, $\mat{H} = \mat{U} \diagOp{\vect{\lambda}} \mat{U}^\top$, with eigenvalues sorted in descending order:
\begin{equation}
\hat{\mat{H}} \coloneq \sum_{j=1}^{\hat{R}} \lambda_{j}\vect{u}_{j}\vect{u}_{j}^{\top},
\end{equation}
where $\vect{u}_{j}$ is the $j$th column of $\mat{U}$, $\lambda_j \geq \hat{t}$ for $j=1,\dots,\hat{R}$, and $\lambda_{\hat{R}+1} < \hat{t}$.
The thresholding hyperparameter $\hat{t}$ is selected via cross-validation, with candidate values ranging from the smallest to the largest eigenvalue of the Hessian $\mat{H}$. One may ask why an additional hyperparameter is needed when $\gamma$ already controls both the conditioning of the optimization problem and the Hessian eigenvalues. The issue is that excessively large $\gamma$ can drive the CPD tensor-product structure toward degenerate (near-zero) solutions. Introducing $\hat{t}$ decouples these roles: $\gamma$ governs MAP estimation, while $\hat{t}$ regulates Hessian inversion.

\begin{figure*}[ht!]
\includegraphics[width=170mm]{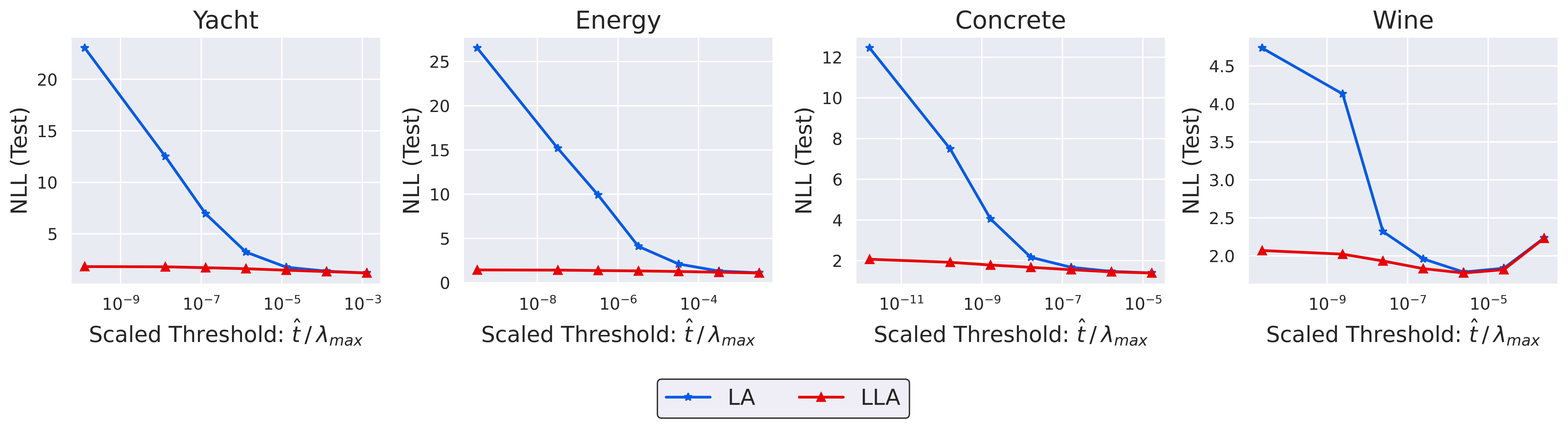}
\caption{
Test NLL performance of the LA-TNKM model with LA (Laplace Approximation) and LLA (Linearized Laplace Approximation) predictive distributions (blue and red curves respectively) as a function of the thresholding hyperparameter $\hat{t}$ (logarithmic x-axis) scaled by the largest eigenvalue of the full Hessian matrix for different real-life datasets (column-wise). The results consistently highlight the advantage of using the linearized Laplace approximation over the original formulation, in line with prior findings for BNNs~\citep{BNN_LLA_Immer_2021}.
}
\label{plot_linearization}
\end{figure*}

\subsection{Predictive Distribution}
Regardless of the chosen posterior approximation (e.g., the Hessian form), the predictive distribution is obtained by integrating the approximate posterior $q(\vect{v})$ with the conditional model:
\begin{equation}\label{pd_la}
    \begin{split}
        p_{LA}(y|\vect{x}, \mathcal{D}) &\coloneq \mathbb{E}_{q(\vect{v})}\sqBr{p(y|f(\vect{x}, \vect{v}))} \\
        & \approx \dfrac{1}{S}\sum_{s=1}^S p(y|f(\vect{x}, \vect{v}_s))\text{, } \vect{v}_s \sim q(\vect{v}),
    \end{split}
\end{equation}
where the (intractable) expectation is approximated via Monte Carlo sampling, since $f$ depends nonlinearly on $\vect{v}_s$. 
When using the GGN approximation to the Hessian, the model $f(\vect{x}, \vect{v})$ can be interpreted as a generalized linear model (GLM) with a local linearization~\citep{BNN_LLA_Immer_2021, LLA_Evidence_Daxberger_2022} of the form:
\begin{equation}\label{linear_f}
    f^{lin}_{\vect{v}^*}(\vect{x}, \vect{v}) \coloneq f(\vect{x}, \vect{v}) + \vect{g}_{\vect{v}^*}(\vect{x})^\top(\vect{v} - \vect{v}^*),
\end{equation}
where $\vect{g}_{\vect{v}^*}(\vect{x}) \inCo{DIR}$ is the gradient of the function $f$ evaluated at point $\vect{v}^*$. Since the GGN approximation for posterior covariance corresponds to the posterior of the linearized model~\eqref{linear_f}, we can use this model for prediction, consistent with GGN-based inference~\citep{BNN_LLA_Immer_2021}:
\begin{equation}\label{pd_linearized_la}
    p_{LLA}(y|\vect{x}, \mathcal{D}) \coloneq \mathbb{E}_{q(\vect{v})}\sqBr{p(y|f^{lin}_{\vect{v}^*}(\vect{x}, \vect{v}))}.
\end{equation}
We denote the standard predictive distribution as LA and its linearized counterpart as LLA.
In Section~\ref{lla_vs_la}, we empirically compare their predictive performance for tensor network kernel machines.

\begin{figure*}[ht!]
\includegraphics[width=170mm]{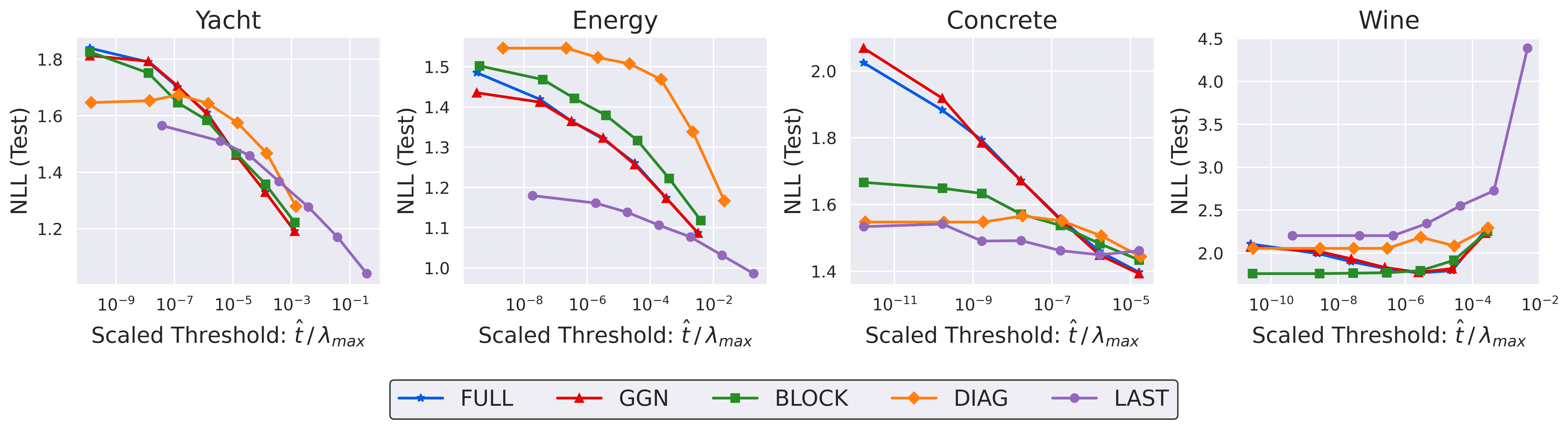}
\caption{
Test NLL performance of the LA-TNKM model using LLA (Linearized Laplace Approximation) predictive distribution with five approximations of the Hessian as a function of the thresholding hyperparameter $\hat{t}$ (logarithmic x-axis) scaled by the largest eigenvalue of the corresponding Hessian matrix for different real-life datasets (column-wise). The results consistently show that Hessian choice is data-dependent. However, Last and Block approximations (purple and green colors respectively) show better performance on average.
}
\label{plot_hess}
\end{figure*}

\section{NUMERICAL EXPERIMENTS}\label{experiments}
For all experiments, the input $\vect{x}$ and output $\vect{y}$ are standardized. A unit-norm polynomial feature mapping~\citep{SP_BTN_Mandic_2022} is applied as the local mapping $\vect{\phi}^{(d)}$ in Equation~\eqref{tensor_product_features} to improve training stability. No normalization was applied to the synthetic dataset in Section~\ref{subsec:ill_example}. We set the number of basis functions uniformly across all $d=1, \dots, D$ as $I_d = I$ to simplify hyperparameter tuning. For real-data experiments, we use nine UCI regression datasets~\citep{UCI}. For each dataset, 90\% of the data is randomly chosen for training, with the remaining 10\% reserved for testing. 
We evaluate probabilistic performance using the test negative log-likelihood (NLL; lower is better). While NLL is used as the primary metric in the main text, we also assess predictive accuracy and uncertainty calibration using RMSE, ECP-95, WCPI-95, and RCE. Formal definitions are provided in Appendix~\ref{app:metrics}, and full experimental results are reported in Appendix~\ref{app:exp_res}. 
The source code and data required to reproduce all experiments are publicly available on GitHub.\footnote{\url{https://github.com/AlbMLpy/laplace-tnkm}}

\begin{figure*}[ht!]
\includegraphics[width=170mm]{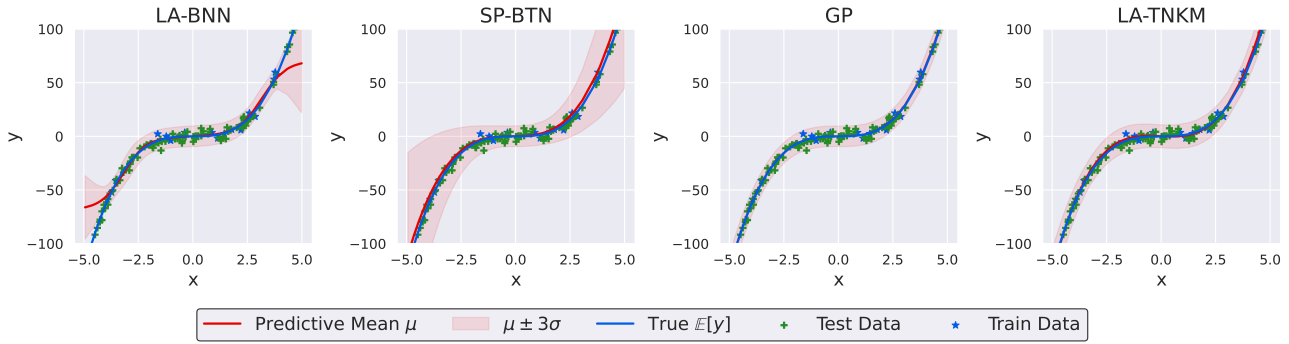}
\caption{
Predictive distributions on the synthetic dataset ($y = x^3 + \epsilon $, where $\epsilon \sim \mathcal{N}(0, 3^2)$) for LA-BNN, SP-BTN, GP, and the proposed LA-TNKM (from left to right). Compared to the baselines, LA-TNKM more closely matches GP regression behavior, particularly in terms of predictive uncertainty.
}
\label{fig:x3}
\end{figure*}

\subsection{Ablation Study}
In this set of experiments, we investigate key design choices in the development of the LA-TNKM model and their effect on predictive performance. First, we compare the original Laplace approximation~\eqref{pd_la} with its linearized variant~\eqref{pd_linearized_la}. Second, in addition to the theoretical complexities summarized in Table~\ref{table:hess_complexity}, we empirically evaluate different Hessian approximations. To this end, we measure the NLL on four real-world datasets while systematically varying the thresholding hyperparameter $\hat{t} = 10^{-5}, 10^{-3}, 10^{-2}, 10^{-1}, 1, 10, 100$.
All other hyperparameters are fixed. Figures~\ref{plot_linearization} and~\ref{plot_hess} show thresholding values scaled by the largest eigenvalue of each Hessian, accounting for spectral differences across Hessians.

\textbf{Q1: Is Linearization Useful for LA-TNKM?}\label{lla_vs_la}
The short answer is yes. Figure~\ref{plot_linearization} shows the comparison of linearized and nonlinearized predictive distributions on four data sets. As we can see, the linearized model consistently outperforms the original model on test NLL. This behavior can be explained by a mismatch between the predictive conditional model~\eqref{cond_model} and Laplace-GGN posterior~\eqref{ggn_hess} as pointed out in case of BNNs~\citep{BNN_LLA_Immer_2021}.

\textbf{Q2: What Hessian Approximation to Choose?}
There is no short answer here. As shown in Figure~\ref{plot_hess}, the last-core Hessian approximation achieves the best performance on the Yacht and Energy datasets, but performs poorly on the Wine dataset, where the Block approximation yields the best results. Furthermore, the thresholding hyperparameter $\hat{t}$ plays a key role, with higher values generally improving NLL across most datasets. This suggests that the approximated Hessian spectrum contains very small eigenvalues which, when inverted to form the covariance matrix, yield excessively large predictive variances. As a result, test performance degrades, as reflected in higher NLL. We recommend using the most efficient Hessian approximation (Last or Block, Table~\ref{table:hess_complexity}) with the hyperparameter $\hat{t}$ tuned via cross-validation. Furthermore, Table~\ref{table:uci-hessian} in Appendix~\ref{app:ablation} demonstrates that the last-core Hessian consistently yields better data-fitting performance (lower RMSE) and improved uncertainty estimation (lower NLL and RCE) across nine UCI datasets.

\subsection{Uncertainty on Synthetic Data}\label{subsec:ill_example}
To evaluate the uncertainty estimation capability of the proposed model, we conducted experiments on a synthetic dataset. We compared the LA-TNKM method with three baselines: Bayesian neural network using Laplace approximation (see Section~\ref{app:baselines}); structured posterior Bayesian tensor network (SP-BTN)~\citep{SP_BTN_Mandic_2022}, a tensor-based method using variational inference for posterior approximation; and GP regression~\citep{GP_Rasmussen_2006}. The experiment followed the one-dimensional regression setup from~\citep{SP_BTN_Mandic_2022}, with data generated according to $y = x^3 + \epsilon $, where $\epsilon \sim \mathcal{N}(0, 3^2)$. The training set comprised 20 samples drawn uniformly from the interval $[-4, 4]$, and the test set included 100 samples from $\mathcal{U}[-5, 5]$. For all models, we fixed the noise precision to the ground truth value $\beta = \frac{1}{9}$, set the CPD rank to $R = 2$, and used a local dimension of $I = 4$. For SP-BTN model we used $M_R = 2$ and for LA-TNKM we used the last-core approximation~\eqref{eq_last_core} with linearized LA~\eqref{pd_linearized_la}.
Figure~\ref{fig:x3} shows that LA-TNKM performs on par with GP regression, which serves as a strong benchmark. In contrast, SP-BTN tends to overestimate predictive variance, resulting in overly broad and less informative 
confidence intervals, likely due to underfitting in low-data regimes. By comparison, LA-BNN exhibits well-calibrated behavior in regions supported by training data, but shows signs of model mismatch when extrapolating beyond the observed data range. This behavior is further supported by the quantitative results in Table~\ref{table:x-cube-comparison} in Appendix~\ref{app:synthetic}, where the corresponding evaluation metrics are reported.

\begin{table*}[!ht]
\caption{Average test NLL on several UCI regression datasets. We train on random 90\% of the data and predict on 10\%. Each experiment is repeated 10 times, and we report the mean ± standard deviation. $N$ is the sample size, and $D$ is the data dimensionality. 'N/A' refers to cases where reporting a value is not feasible due to prohibitively large computational demands or lack of numerical stability. Among the evaluated methods, LA-TNKM (Last) ranks highest on four of the nine datasets and performs comparably on the rest.}
\label{table:uci-comparison}
\begin{center}
\scalebox{0.7}{
\begin{tabular}{||l|c|c||c|c|c|c|c|c|c|c||c||}
\toprule
Dataset & N & D & GWI-DNN & FVI & BBB & $\alpha$=0.5 & EXACT GP & LA-BNN & MF-BTN & SP-BTN & LA-TNKM \\
\midrule
BOSTON & 506 & 13 & 2.27±0.06 & 2.33±0.04 & 2.76±0.04 & 2.45±0.02 & 2.46±0.04 & 2.10±0.17 & 1.45±0.11 & 1.19±0.07 & \textbf{0.95±0.16} \\
CONCRETE & 1030 & 8 & 2.64±0.06 & 2.88±0.06 & 3.28±0.01 & 3.06±0.03 & 3.05±0.02 & 1.56±0.05 & 1.41±0.07 & \textbf{0.74±0.06} & 0.82±0.09 \\
ENERGY & 768 & 8 & 0.91±0.12 & 0.58±0.05 & 2.17±0.02 & 0.95±0.09 & 0.54±0.02 & 1.32±0.02 & 1.41±0.04 & 0.62±0.07 & \textbf{-1.40±0.04} \\
KIN8NM & 8192 & 8 & \textbf{-1.2±0.03} & -1.15±0.01 & -0.81±0.01 & -0.92±0.02 & N/A±0.00 & 1.22±0.01 & 1.42±0.01 & 0.89±0.02 & 0.48±0.02 \\
NAVAL & 11934 & 16 & -6.76±0.1 & \textbf{-7.21±0.06} & -2.80±0.00 & -2.97±0.14 & N/A±0.00 & N/A±0.00 & 1.42±0.01 & 1.37±0.02 & -1.16±0.64 \\
POWER & 9568 & 4 & 2.74±0.02 & 2.69±0.00 & 2.83±0.01 & 2.81±0.00 & N/A±0.00 & 0.88±0.02 & 1.42±0.02 & 0.57±0.01 & \textbf{-0.01±0.03} \\
PROTEIN & 45730 & 9 & 2.87±0.0 & 2.85±0.00 & 3.00±0.00 & 2.90±0.00 & N/A±0.00 & \textbf{1.00±0.04} & 1.42±0.01 & 1.36±0.01 & 1.06±0.01 \\
RED WINE & 1588 & 11 & 0.76±0.08 & 0.97±0.06 & 1.01±0.02 & 1.01±0.02 & \textbf{0.26±0.03} & 2.03±0.11 & 1.35±0.06 & 1.26±0.05 & 1.24±0.06 \\
YACHT & 308 & 6 & 0.29±0.1 & 0.59±0.11 & 1.11±0.04 & 0.79±0.11 & 0.10±0.05 & 1.34±0.02 & 1.38±0.10 & 0.58±0.34 & \textbf{-0.52±0.17} \\
\bottomrule
\end{tabular}
}
\end{center}
\end{table*}

\subsection{UCI Regression}\label{subsec:uci_reg}
To evaluate LA-TNKM against existing probabilistic models, we compare it with several baselines on a range of real-world datasets. 
We consider several weight-space BNN approaches, including Bayes-by-Backprop (BBB)~\citep{BBB_Blundell_2015}, variational alpha-dropout ($\alpha = 0.5$)~\citep{Dropout_BNN_Li_2017} and Laplace approximation (LA-BNN) (see Section~\ref{app:baselines}).
In addition, we include GP regression~\citep{GP_Rasmussen_2006} as a reference and compare two function-space BNN inference methods: functional variational inference (FVI)~\citep{FVI_Ma_2021} and Gaussian Wasserstein inference BNN (GWI-DNN)~\citep{GVI_DL_Wild_2022}. For a fair comparison, we also include two tensor-based regression models: MF-BTN and SP-BTN~\citep{SP_BTN_Mandic_2022}. The optimal hyperparameters of LA-TNKM, MF-BTN, and SP-BTN — such as the rank $R$, polynomial order $I$, and thresholding parameter $\hat{t}$ — are determined via cross-validation. For the Laplace approximation, we used the linearized model for prediction~\eqref{pd_linearized_la} and the last-core Hessian~\eqref{eq_last_core} to approximate the posterior.

In Table~\ref{table:uci-comparison}, we present the average test NLL for all compared methods. The results for the first five baselines are taken from~\citep{GVI_DL_Wild_2022}. The experimental results demonstrate that LA-TNKM outperforms other methods on four out of nine datasets, highlighting its effectiveness. However, performance differs across datasets, largely due to the choice of the feature map $\vect{\phi}(\vect{x})$, which plays a critical role in determining the model’s approximation and extrapolation behavior. If the underlying function cannot be adequately represented with a low-order polynomial basis (i.e., when $I$ is small), approximation errors accumulate, degrading performance. 
In addition, Tables~\ref{table:uci-rmse-rce} and~\ref{table:uci-ecp-wcpi} in Appendix~\ref{app:uci-reg} report the complete set of evaluation metrics (RMSE, ECP-95, WCPI-95, RCE). Across most datasets, LA-TNKM achieves better uncertainty calibration, with lower RCE and coverage closer to the nominal level (ECP-95), as well as smaller WCPI-95 values, indicating sharper predictive intervals. Overall, these results suggest that LA-TNKM balances calibration and precision, providing reliable yet informative uncertainty estimates compared to the other baselines.

\section{RELATED WORK}
Our contributions lie at the intersection of BNNs, their connections to GPs, and probabilistic methods based on tensor networks (TNs). 
\citet{LA_BNN_GP_Khan_2019} establish a theoretical link between BNNs and GPs, showing that Gaussian posterior approximations — obtained via Laplace or variational inference — correspond to GP regression posteriors. Local linearization and its connection to the GGN approximation in BNNs is explored by \citet{BNN_LLA_Immer_2021}, and further extended in \cite{LLA_Evidence_Daxberger_2022}, where the authors address mismatches between classical Laplace approximation assumptions and the behavior of modern neural networks. An alternative perspective is presented by \citet{Bayes_No_Underfit_Miani_2025}, who propose a scalable, matrix-free method for constructing Bayesian approximations in the Hessian’s null space to mitigate underfitting.

A related line of work focuses on probabilistic TN models. \citet{BayesianMethod_TNs_Draper_2021} apply the Laplace approximation to tensor train models with Bayesian priors over the network parameters. In contrast, \citet{SP_BTN_Mandic_2022} propose a scalable variational inference framework for CPD-based models, using low-rank and Kronecker-structured posteriors to balance expressiveness and tractability. Complementing these approaches, \citet{TN_GPR_Menzen_2023} approximate GPs by projecting the problem into a low-dimensional subspace defined by a TN, performing Bayesian inference there, and then projecting back to the original space for GP predictions.

\section{CONCLUSION}
In this paper, we developed a novel Bayesian Tensor Network Kernel Machine, LA-TNKM, which provides uncertainty estimates for its predictions.
We applied the Laplace approximation to the CPD weight posterior distribution to make Bayesian inference tractable and introduced several types of Hessian matrix approximations, highlighting their respective advantages and limitations. 
We experimentally validated the benefits of local linearization of the prediction function and compared the proposed LA-TNKM model against GP- and BNN-based baselines. The results demonstrate the competitiveness and effectiveness of LA-TNKM across a diverse range of datasets and applications. 
Future works may focus on improving the optimization strategy (finding MAP estimate), designing problem-dependent priors, and exploring alternative tensor network architectures (e.g., Hierarchical Tucker) to enhance flexibility and performance.

\begin{acknowledgements} 
    This publication is part of the project Sustainable learning for Artificial Intelligence from noisy large-scale data (with project number VI.Vidi.213.017) which is financed by the Dutch Research Council (NWO).
\end{acknowledgements}

\bibliography{bibfile}

\newpage

\onecolumn

\title{Laplace Approximation for Bayesian Tensor Network Kernel Machines\\(Supplementary Material)}
\maketitle

\appendix

\section{BAYESIAN TENSOR
NETWORK KERNEL MACHINES}

\subsection{Hessian Matrix Approximations}\label{hessian_proofs}
Before presenting the proof of the full Hessian theorem, we first establish three auxiliary lemmas.
\begin{lemma}\label{lemma_1}
   \textit{Exploiting the multilinearity property, the response of the CPD kernel machine can be expressed as following for} $d = 1, \dots, D$\textit{:} 
   \begin{equation*}
   \begin{split}
       \vect{f} &= \mat{\Phi}\vect{g}\rBr{\vect{v}} = \mat{A}^{(d)}\vect{v}^{(d)} = \mat{Z}\vect{1},\\
       \mat{Z} &\coloneq \bigoasterisk_{\substack{d=1}}^D \mat{\Phi}^{(d)} \mat{V}^{(d)}\text{.}
    \end{split}
   \end{equation*}
\end{lemma}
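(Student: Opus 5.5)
The plan is to verify all three expressions entrywise, for a single sample $n$, since every object involved is defined row by row. First fix a row $n$ and expand $f_n = \vect{\phi}(\vect{x}_n)^\top \vect{g}(\vect{v})$ using the tensor-product features~\eqref{tensor_product_features} and Definition~\ref{CPD_Kron}:
\begin{equation*}
f_n = \sum_{r=1}^R \rBr{\vect{\phi}^{(D)}(x_{nD}) \otimes \dots \otimes \vect{\phi}^{(1)}(x_{n1})}^\top \rBr{\vect{v}^{(D)}_r \otimes \dots \otimes \vect{v}^{(1)}_r}.
\end{equation*}
Next apply the mixed-product rule $(\mat{a}\otimes\mat{b})^\top(\mat{c}\otimes\mat{d}) = (\mat{a}^\top\mat{c})(\mat{b}^\top\mat{d})$ repeatedly. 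This gives
\begin{equation*}
f_n = \sum_{r=1}^R \prod_{d=1}^D \vect{\phi}^{(d)}(x_{nd})^\top \vect{v}^{(d)}_r = \sum_{r=1}^R \prod_{d=1}^D \sqBr{\mat{\Phi}^{(d)}\mat{V}^{(d)}}_{nr}.
\end{equation*}
The product over $d$ is exactly the $(n,r)$ entry of the Hadamard product $\mat{Z}$. Summing over $r$ is multiplication by $\vect{1}$, which yields $\vect{f} = \mat{Z}\vect{1}$.

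For the middle expression, isolate the factor $d$ in the product:
\begin{equation*}
f_n = \sum_{r=1}^R Z^{(d)}_{nr}\, \vect{\phi}^{(d)}(x_{nd})^\top \vect{v}^{(d)}_r = \sum_{r=1}^R \sum_{i} Z^{(d)}_{nr}\Phi^{(d)}_{ni} V^{(d)}_{ir},
\end{equation*}
with $\mat{Z}^{(d)}$ as in~\eqref{wk_update_specs}. This is linear in the entries of $\mat{V}^{(d)}$. It remains to check that $\sqBr{\mat{Z}^{(d)}\odot_R\mat{\Phi}^{(d)}}_{n,(ir)} = Z^{(d)}_{nr}\Phi^{(d)}_{ni}$, where the column index $(ir)$ must coincide with the position of $V^{(d)}_{ir}$ in $\vecOp{\mat{V}^{(d)}}$. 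By Definition~\ref{def_vec}, $\vecOp{\mat{V}^{(d)}}$ has $i$ running fastest, with index $i + rI$. The row-wise Khatri--Rao product is the Kronecker product of row $n$ of $\mat{Z}^{(d)}$ with row $n$ of $\mat{\Phi}^{(d)}$, so its column index is also $i + rI$, again with the $\mat{\Phi}^{(d)}$ index fastest. Hence row $n$ of $\mat{A}^{(d)}\vect{v}^{(d)}$ equals $f_n$.

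The only real obstacle is this index bookkeeping: matching the ordering convention of $\odot_R$ with the column-major $\vecOp{\cdot}$ of Definition~\ref{def_vec}. Everything else is the mixed-product property. The implicit real part plays no role, since all three expressions are identical complex quantities before the real part is taken.
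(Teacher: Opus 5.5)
Your proof is correct and follows the paper's argument. You expand $f_n$ entrywise and factor the CPD sum into per-mode contractions $[\mat{\Phi}^{(d)}\mat{V}^{(d)}]_{nr}$, using the Kronecker mixed-product rule where the paper writes out the nested sums; you then read off $\mat{Z}\vect{1}$ and, after isolating mode $d$, $\mat{A}^{(d)}\vect{v}^{(d)}$ through the row-wise Khatri--Rao product and Definition~\ref{def_vec}. Your explicit check that $\odot_R$ and $\vecOp{\cdot}$ both run the $i$ index fastest (column index $i+rI$) is a useful detail that the paper handles only by citing the definitions.
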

\begin{proof}
We begin the proof by expressing the response of the CPD kernel machine in element-wise form:
\begin{equation*}
    \begin{split}
        f_{n} &= \sqBr{\mat{\Phi}\vect{g}\rBr{\vect{v}}}_{n}\\
        &= \sum_{i=1}^{I^D}\Phi_{ni}g_i\rBr{\vect{v}}\\
        &=\sum_{i_1}^{I}\dots\sum_{i_D}^{I}\Phi_{ni_1}^{(1)} \dots \Phi_{ni_D}^{(D)}\sum_{r=1}^R V_{i_1r}^{(1)} \dots V_{i_Dr}^{(D)}\\
        &=\sum_{i_d}^{I}\sum_{r=1}^R \Phi_{ni_d}^{(d)} V_{i_dr}^{(d)} \sqBr{\mat{\Phi}^{(1)}\mat{V}^{(1)}}_{nr} \dots \sqBr{\mat{\Phi}^{(d-1)}\mat{V}^{(d-1)}}_{nr} \sqBr{\mat{\Phi}^{(d+1)}\mat{V}^{(d+1)}}_{nr} \dots \sqBr{\mat{\Phi}^{(D)}\mat{V}^{(D)}}_{nr}\\
        &=\sum_{i_d}^{I}\sum_{r=1}^R \Phi_{ni_d}^{(d)} V_{i_dr}^{(d)} \sqBr{\bigoasterisk_{\substack{k=1 \\ k\neq d}}^D \mat{\Phi}^{(k)} \mat{V}^{(k)}}_{nr}\\
        &=\sum_{i_d}^{I}\sum_{r=1}^R \Phi_{ni_d}^{(d)} V_{i_dr}^{(d)} Z_{nr}^{(d)}\text{.}
    \end{split}
\end{equation*}
On the one hand, the last expression can be reformulated as:
\begin{equation*}
        f_n = \sum_{i_d}^{I}\sum_{r=1}^R \Phi_{ni_d}^{(d)} V_{i_dr}^{(d)} Z_{nr}^{(d)}
        = \sum_{r=1}^R \sqBr{\mat{\Phi}^{(d)}\mat{V}^{(d)}}_{nr} Z_{nr}^{(d)}
        = \sum_{r=1}^R Z_{nr}
        = \sqBr{\mat{Z}\vect{1}}_n\text{.}
\end{equation*}
On the other hand, applying the definitions of the row-wise Khatri–Rao product~\citep{TN1_Cichocki_2016} and vectorization Definition~\ref{def_vec} yields the following reformulation:
\begin{equation*}
    f_n = \sum_{i_d}^{I}\sum_{r=1}^R \Phi_{ni_d}^{(d)} V_{i_dr}^{(d)} Z_{nr}^{(d)}
    = \sum_{(i_dr) = 1}^{IR}\sqBr{\mat{Z}^{(d)} \odot_R \mat{\Phi}^{(d)}}_{n(i_dr)}v_{(i_dr)}^{(d)} = \sqBr{\sqBr{\mat{Z}^{(d)} \odot_R \mat{\Phi}^{(d)}}\vect{v}^{(d)}}_{n} = \sqBr{\mat{A}^{(d)}\vect{v}^{(d)}}_{n}\text{.}
\end{equation*}
\end{proof}

\begin{lemma}\label{lemma_2}
\textit{Suppose} $\mat{F} \coloneq \sqBr{\mat{B} \oasterisk \mat{C}\mat{V}} \odot_R \mat{A} $, $\vecOp{\mat{V}} \coloneq \vect{v}$. \textit{Then the following equation holds:}
   \begin{equation*}
\dfrac{\partial\sqBr{\mat{F}^\top\vect{z}}_{(ir)}}{\partial v_{(jp)}} = \delta_{rp}\sqBr{\mat{A}^\top\mat{H}^{(r)}\mat{C}}_{ij}\text{ , }
   \end{equation*}
\textit{with the intermediate matrix:}
\begin{equation*}
    \begin{split}
    &\mat{H}^{(r)} \coloneq \diagOp{\vect{z}}\diagOp{\sqBr{\mat{B}}_{:r}}.
    \end{split}
\end{equation*}
\end{lemma}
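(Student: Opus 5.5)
We prove Lemma~\ref{lemma_2} by writing everything in index form, differentiating entrywise, and recognising the result as a matrix product. Take $\mat{B}\inCo{N\times R}$, $\mat{C}\inCo{N\times J}$, $\mat{V}\inCo{J\times R}$ and $\mat{A}\inCo{N\times I}$, with $\vect{v}=\vecOp{\mat{V}}$ indexed by the multi-index $(jp)$ as in Definition~\ref{def_vec}, so that $v_{(jp)}=V_{jp}$.

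First we write out the entries of $\mat{F}$. The row-wise Khatri--Rao product gives
\[
F_{n(ir)} = \sqBr{\mat{B}\oasterisk \mat{C}\mat{V}}_{nr} A_{ni} = B_{nr}\Big(\sum_{q} C_{nq}V_{qr}\Big)A_{ni}.
\]
The index ordering is the same one used for $\mat{A}^{(d)}=\mat{Z}^{(d)}\odot_R\mat{\Phi}^{(d)}$ in Lemma~\ref{lemma_1}: the factor from the right operand, here $\mat{A}$ with index $i$, varies fastest. Contracting with $\vect{z}$ then gives
\[
\sqBr{\mat{F}^\top\vect{z}}_{(ir)}=\sum_{n} z_n A_{ni} B_{nr}\sum_q C_{nq}V_{qr}.
\]

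Next we differentiate. This expression is linear in $\mat{V}$, and only the column $r$ of $\mat{V}$ appears in it. Hence
\[
\frac{\partial V_{qr}}{\partial V_{jp}}=\delta_{qj}\delta_{rp},
\]
and therefore
\[
\frac{\partial\sqBr{\mat{F}^\top\vect{z}}_{(ir)}}{\partial v_{(jp)}}=\delta_{rp}\sum_n A_{ni}\, z_n B_{nr}\, C_{nj}.
\]

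Finally we identify the sum $\sum_n A_{ni}\,(z_nB_{nr})\,C_{nj}$ as the $(i,j)$ entry of $\mat{A}^\top\diagOp{\vect{z}}\diagOp{\sqBr{\mat{B}}_{:r}}\mat{C}$. This is exactly $\sqBr{\mat{A}^\top\mat{H}^{(r)}\mat{C}}_{ij}$, which proves the lemma.

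The only step needing care is keeping the multi-index conventions consistent between the Khatri--Rao product and $\vecOp{\cdot}$. If the paper's convention had $i$ and $r$ swapped, the same computation goes through with the labels relabelled. The differentiation itself is trivial because $\mat{F}$ is linear in $\mat{V}$.

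No conjugation subtleties arise. The derivative is taken with respect to $v$ itself, not in the Wirtinger sense, so the expression is holomorphic in $\mat{V}$.
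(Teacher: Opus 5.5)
Your proof is correct and matches the paper's argument essentially line for line: write $\sqBr{\mat{F}^\top\vect{z}}_{(ir)}=\sum_n z_n A_{ni}B_{nr}\sqBr{\mat{C}\mat{V}}_{nr}$ entrywise, differentiate to get the factor $\delta_{rp}C_{nj}$, and recognise the resulting sum as $\sqBr{\mat{A}^\top\diagOp{\vect{z}}\diagOp{\sqBr{\mat{B}}_{:r}}\mat{C}}_{ij}$. Your check of the multi-index ordering against Lemma~\ref{lemma_1}, and your reading of $^\top$ as a plain transpose with no conjugation, are both consistent with how the paper's own proof treats these points.
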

\begin{proof}
We obtain the following expression for the target function, $\sqBr{\mat{F}^\top\vect{z}}_{(ir)}$, using the definitions of the row-wise Khatri–Rao product and the Hadamard product~\citep{TN1_Cichocki_2016}:
\begin{equation}\label{Ftz_def}
    \sqBr{\mat{F}^\top\vect{z}}_{(ir)} = \sum_{n=1}^N z_n F_{n(ir)} = \sum_{n=1}^N z_n A_{ni} B_{nr} \sqBr{\mat{C}\mat{V}}_{nr}\text{.}
\end{equation}
Using the Kronecker delta $\delta_{ij}$ and the Leibniz product rule, the derivative of the target function in Equation~\eqref{Ftz_def} is given by:
\begin{equation}
\begin{split}
&\dfrac{\partial\sqBr{\mat{F}^\top\vect{z}}_{(ir)}}{\partial v_{(jp)}}\\
=&\text{ } \delta_{rp} \sum_{n=1}^N z_n A_{ni} B_{nr} C_{nj}\\
=&\text{ } \delta_{rp} \sqBr{\mat{A}^\top \diagOp{\vect{z}} \diagOp{\sqBr{\mat{B}}_{:r}}\mat{C}}_{ij}\\
=&\text{ } \delta_{rp} \sqBr{\mat{A}^\top \mat{H}^{(r)}\mat{C}}_{ij}\text{.}
\end{split}
\end{equation}
\end{proof}

\begin{lemma}\label{lemma_3}
\textit{Suppose} $\mat{F} \coloneq \sqBr{\mat{B} \oasterisk \mat{C}\mat{V}} \odot_R \mat{A}$ , $\vecOp{\mat{V}} \coloneq \vect{v}$, $\vecOp{\mat{Z}} \coloneq \vect{z}$. \textit{Then the following equation holds:}
\begin{equation*}
\dfrac{\partial\sqBr{\mat{F}^\top\mat{F}\vect{z}}_{(ir)}}{\partial v_{(jp)}} = \sqBr{\mat{A}^\top\mat{D}^{(r, p)}\mat{C}}_{ij} + \delta_{rp}\sqBr{\mat{A}^\top\mat{G}^{(r)}\mat{C}}_{ij}\text{ , }
   \end{equation*}
\textit{with the intermediate matrices:}
\begin{equation*}
    \begin{split}
    \mat{D}^{(r, p)} &\coloneq \diagOp{\sqBr{\rBr{\mat{A\mat{Z} \oasterisk \mat{B}}} \odot_R \rBr{\mat{C\mat{V} \oasterisk \mat{B}}}}_{:(rp)}}, \\
    \mat{G}^{(r)} &\coloneq \diagOp{\rBr{\mat{C}\mat{V} \oasterisk \mat{B} \oasterisk \mat{A}\mat{Z}}\vect{1}}\diagOp{\sqBr{\mat{B}}_{: r}}.
    \end{split}
\end{equation*}
\end{lemma}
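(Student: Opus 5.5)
The plan is a direct entrywise computation: expand $\sqBr{\mat{F}^\top\mat{F}\vect{z}}_{(ir)}$ as a product of two factors that both depend on $\vect{v}$, then apply the Leibniz product rule. One of the two resulting terms is exactly Lemma~\ref{lemma_2}. The other is computed by hand and recast in diagonal-matrix form.

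First, as in the proof of Lemma~\ref{lemma_2}, the definitions of the row-wise Khatri–Rao and Hadamard products give $F_{n(ir)} = A_{ni}B_{nr}\sqBr{\mat{C}\mat{V}}_{nr}$. Set $\vect{w} \coloneq \mat{F}\vect{z}$ and use $\vect{z} = \vecOp{\mat{Z}}$ with the index ordering of Definition~\ref{def_vec}. Then
\begin{equation*}
w_n = \sum_{i,r} A_{ni}B_{nr}\sqBr{\mat{C}\mat{V}}_{nr}Z_{ir} = \sum_{r}\sqBr{\mat{A}\mat{Z}}_{nr}B_{nr}\sqBr{\mat{C}\mat{V}}_{nr},
\end{equation*}
so that $\vect{w} = \rBr{\mat{C}\mat{V}\oasterisk\mat{B}\oasterisk\mat{A}\mat{Z}}\vect{1}$. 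Hence $\sqBr{\mat{F}^\top\mat{F}\vect{z}}_{(ir)} = \sum_n F_{n(ir)}w_n$, and both $F_{n(ir)}$ and $w_n$ depend on $\vect{v}$.

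Second, differentiate with respect to $v_{(jp)} = V_{jp}$ using the product rule. This gives two terms.

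\begin{itemize}
\item \textbf{Term where $\vect{w}$ is held fixed.} This is precisely the setting of Lemma~\ref{lemma_2} with $\vect{z}$ replaced by $\vect{w}$. It yields $\delta_{rp}\sqBr{\mat{A}^\top\diagOp{\vect{w}}\diagOp{\sqBr{\mat{B}}_{:r}}\mat{C}}_{ij}$. Substituting the expression for $\vect{w}$ from the first step gives exactly $\delta_{rp}\sqBr{\mat{A}^\top\mat{G}^{(r)}\mat{C}}_{ij}$.

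\item \textbf{Term where $\mat{F}$ is held fixed.} Since $\partial\sqBr{\mat{C}\mat{V}}_{nr'}/\partial V_{jp} = \delta_{r'p}C_{nj}$, only the $r'=p$ summand of $w_n$ survives, and $\partial w_n/\partial v_{(jp)} = \sqBr{\mat{A}\mat{Z}}_{np}B_{np}C_{nj}$. The term is therefore
\begin{equation*}
\sum_n A_{ni}\,\Big(B_{nr}\sqBr{\mat{C}\mat{V}}_{nr}\,\sqBr{\mat{A}\mat{Z}}_{np}B_{np}\Big)\,C_{nj}.
\end{equation*}
This is $\sqBr{\mat{A}^\top\mat{\Delta}\mat{C}}_{ij}$ for a diagonal $\mat{\Delta}$ with entries $\sqBr{\mat{A}\mat{Z}\oasterisk\mat{B}}_{np}\sqBr{\mat{C}\mat{V}\oasterisk\mat{B}}_{nr}$.
\end{itemize}

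The only step needing care is identifying this diagonal with column $(rp)$ of $\rBr{\mat{A}\mat{Z}\oasterisk\mat{B}}\odot_R\rBr{\mat{C}\mat{V}\oasterisk\mat{B}}$, which would give $\mat{D}^{(r,p)}$. Under the convention already used in Lemma~\ref{lemma_1} for $\mat{A}^{(d)} = \mat{Z}^{(d)}\odot_R\mat{\Phi}^{(d)}$, the index of the second factor runs fastest. That gives $\sqBr{\mat{X}\odot_R\mat{Y}}_{n(rp)} = Y_{nr}X_{np}$. With $\mat{X} = \mat{A}\mat{Z}\oasterisk\mat{B}$ and $\mat{Y} = \mat{C}\mat{V}\oasterisk\mat{B}$, this matches the diagonal above, so the term equals $\sqBr{\mat{A}^\top\mat{D}^{(r,p)}\mat{C}}_{ij}$. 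Adding the two terms gives the claimed identity.
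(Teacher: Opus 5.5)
Your proposal is correct and follows essentially the same route as the paper: both start from $F_{n(ir)} = A_{ni}B_{nr}[\mat{C}\mat{V}]_{nr}$ and apply the product rule, which gives a $\delta_{rp}$ term (yielding $\mat{G}^{(r)}$) and a term from differentiating the other factor (yielding $\mat{D}^{(r,p)}$). Your Khatri--Rao index identification matches the one the paper uses. The only cosmetic difference is that the paper expands the full double sum over $(km)$ and $n$ and evaluates both Leibniz terms by hand, whereas you factor through $\vect{w} = \mat{F}\vect{z}$ and reuse Lemma~\ref{lemma_2} for the $\delta_{rp}$ term.
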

\begin{proof}
We first express the elements of matrix $\mat{F}$ using the definitions of the row-wise Khatri–Rao product and the Hadamard product~\citep{TN1_Cichocki_2016}:
\begin{equation}\label{F_def}
    F_{n(ir)} = A_{ni} B_{nr} \sqBr{\mat{C}\mat{V}}_{nr}\text{.}
\end{equation}
Using Equation~\eqref{F_def}, we obtain the following expression for the target function:
\begin{equation}\label{FtFz_def}
    \sqBr{\mat{F}^\top\mat{F}\vect{z}}_{(ir)} = \sum_{(km) = 1}^{IR}z_{(km)}\sum_{n=1}^N A_{ni}A_{nk}B_{nr}B_{nm}\sqBr{\mat{C}\mat{V}}_{nr}\sqBr{\mat{C}\mat{V}}_{nm}\text{.}
\end{equation}
Using the Kronecker delta $\delta_{ij}$ and the Leibniz product rule, the derivative of the target function in Equation~\eqref{FtFz_def} is given by:
\begin{equation}\label{deriv_base}
\dfrac{\partial\left[\mat{F}^\top\mat{F}\vect{z}\right]_{(ir)}}{\partial v_{(jp)}} = \sum_{(km) = 1}^{IR}z_{(km)}\sum_{n=1}^N A_{ni}A_{nk}B_{nr}B_{nm}C_{nj}\rBr{{\sqBr{\mat{C}\mat{V}}_{nm}\delta_{rp}} + \sqBr{\mat{C}\mat{V}}_{nr}\delta_{mp}}\text{.}
\end{equation}
We begin with the first term of Equation~\eqref{deriv_base} and rewrite it using the relation $\vecOp{\mat{Z}} = \vect{z}$ as follows:
\begin{equation}\label{deriv_first}
\begin{split}
    &\sum_{(km) = 1}^{IR} z_{(km)}\sum_{n=1}^N A_{ni}A_{nk}B_{nr}B_{nm}C_{nj}\sqBr{\mat{C}\mat{V}}_{nm}\delta_{rp}\\
    =&\text{ }\delta_{rp}\sum_{n=1}^N\sum_{m = 1}^{R} A_{in}^\top C_{nj}\sqBr{\mat{C}\mat{V} \oasterisk \mat{B}}_{nm}\sqBr{\mat{A}\mat{Z}}_{nm}B_{nr}\\
    =&\text{ } \delta_{rp}\sum_{n=1}^N A_{in}^\top C_{nj}\sqBr{\rBr{\mat{C}\mat{V} \oasterisk \mat{B} \oasterisk \mat{A}\mat{Z}}\vect{1}}_{n}B_{nr}\\
    =&\text{ } \delta_{rp}\sqBr{\mat{A}^\top \diagOp{\rBr{\mat{C}\mat{V} \oasterisk \mat{B} \oasterisk \mat{A}\mat{Z}}\vect{1}}\diagOp{\sqBr{\mat{B}}_{: r}}\mat{C}}_{ij}\\
    =&\text{ } \delta_{rp}\left[\mat{A}^\top\mat{G}^{(r)}\mat{C}\right]_{ij}\text{.}
\end{split}
\end{equation}
The second term of Equation~\eqref{deriv_base} can be expressed as follows:
\begin{equation}\label{deriv_second}
\begin{split}
    &\sum_{(km) = 1}^{IR} z_{(km)}\sum_{n=1}^N A_{ni}A_{nk}B_{nr}B_{nm}C_{nj}\sqBr{\mat{C}\mat{V}}_{nr}\delta_{mp}\\
    =&\text{ } \sum_{n=1}^N \sum_{k = 1}^{I} Z_{kp}A_{ni}A_{nk}B_{nr}B_{np}C_{nj}\sqBr{\mat{C}\mat{V}}_{nr}\\
    =&\text{ } \sum_{n=1}^N A_{in}^\top \sqBr{\mat{A}\mat{Z}}_{np}B_{np}\sqBr{\mat{C}\mat{V} \oasterisk \mat{B}}_{nr} C_{nj}\\
    =&\text{ } \sqBr{\mat{A}^\top \diagOp{\sqBr{\rBr{\mat{A}\mat{Z} \oasterisk \mat{B}}\odot_R \rBr{\mat{C}\mat{V} \oasterisk \mat{B}}}_{:(rp)}} \mat{C}}_{ij}\\
    =&\text{ } \left[\mat{A}^\top\mat{D}^{(r, p)}\mat{C}\right]_{ij}\text{.}
\end{split}
\end{equation}

\end{proof}

\MainTheorem*  
\begin{proof}
We begin by proving the case of the diagonal blocks. By Lemma~\ref{lemma_1}, the loss function $\mathcal{J}$ can be rewritten for $k=1, \dots, D$ as:
\begin{equation}\label{lemma_1_loss}
    \mathcal{J}(\vect{v}) = \dfrac{\beta}{2} \|\vect{y} - \mat{A}^{(k)}\vect{v}^{(k)}\|_2^2 + \dfrac{\gamma}{2}\|\vect{v}\|_2^2\text{.}
\end{equation}
The first derivative of the $\ell_{2}$-regularized linear regression loss function is given by:
\begin{equation}\label{first_derivative}
    \dfrac{\partial \mathcal{J}}{\partial \vect{v}^{(k)}} = \beta\rBr{\mat{A}^{(k)\top} \mat{A}^{(k)}\vect{v}^{(k)} - \mat{A}^{(k)\top}\vect{y}} + \gamma \vect{v}^{(k)}\text{.}
\end{equation}
The second derivative w.r.t the same vector $\vect{v}^{(k)}$ can be computed as follows:
\begin{equation}\label{second_diag_derivative}
    \dfrac{\partial^2 \mathcal{J}}{\partial (\vect{v}^{(k)})^\top \partial \vect{v}^{(k)}} = \beta\mat{A}^{(k)\top}\mat{A}^{(k)} + \gamma\mat{I}\text{.}
\end{equation}
Using Equation~\eqref{first_derivative}, the general form of the second derivative with respect to $\vect{v}^{(m)}$ can be expressed as:
\begin{equation}\label{second_general_derivative}
    \dfrac{\partial^2 \mathcal{J}}{\partial (\vect{v}^{(m)})^\top \partial \vect{v}^{(k)}} = \beta \dfrac{\partial \sqBr{\mat{A}^{(k)\top} \mat{A}^{(k)}\vect{v}^{(k)}} }{\partial (\vect{v}^{(m)})^\top} - \beta \dfrac{\partial \sqBr{\mat{A}^{(k)\top}\vect{y}}}{\partial (\vect{v}^{(m)})^\top}\text{.}
\end{equation}
With $\mat{Z}^{(k,m)} \coloneq \bigoasterisk_{\substack{d=1 \\ d\neq k \\ d\neq m}}^D \mat{\Phi}^{(d)}\mat{V}^{(d)}$, the matrix $\mat{A}^{(k)}$ takes the form:
\begin{equation}\label{new_a}
    \mat{A}^{(k)} = \sqBr{\mat{Z}^{(k, m)} \oasterisk \mat{\Phi}^{(m)}\mat{V}^{(m)}} \odot_R \mat{\Phi}^{(k)}\text{.}
\end{equation}
Using Lemmas~\ref{lemma_1} and~\ref{lemma_3} together with Equation~\eqref{new_a} in element-wise form, the first term in Equation~\eqref{second_general_derivative} becomes:
\begin{equation}
    \dfrac{\partial \sqBr{\mat{A}^{(k)\top} \mat{A}^{(k)}\vect{v}^{(k)}}_{(ir)} }{\partial v^{(m)}_{(jp)}} = \sqBr{\mat{\Phi}^{(k)\top}\mat{D}^{(k, m, r, p)}\mat{\Phi}^{(m)}}_{ij} + \delta_{rp}\sqBr{\mat{\Phi}^{(k)\top}\mat{G}^{(k, m, r)}\mat{\Phi}^{(m)}}_{ij}
\end{equation}
with the intermediate matrices:
\begin{equation*}
    \begin{split}
        &\begin{split}
            \mat{D}^{(k, m, r, p)} &= \diagOp{\sqBr{\rBr{\mat{\Phi}^{(k)}\mat{V}^{(k)} \oasterisk \mat{Z}^{(k, m)}} \odot_R \rBr{\mat{\Phi}^{(m)}\mat{V}^{(m)} \oasterisk \mat{Z}^{(k, m)}}}_{:(rp)}} \\
            &= \diagOp{\sqBr{\mat{Z}^{(m)} \odot_R \mat{Z}^{(k)}}_{:(rp)}},
        \end{split}\\
        &\begin{split}
            \mat{G}^{(k, m, r)} &\coloneq \diagOp{\rBr{\mat{\Phi}^{(m)}\mat{V}^{(m)} \oasterisk \mat{Z}^{(k, m)} \oasterisk \mat{\Phi}^{(k)}\mat{V}^{(k)}}\vect{1}}\diagOp{\sqBr{\mat{Z}^{(k, m)}}_{: r}}\\
            &=\diagOp{\vect{f}}\diagOp{\sqBr{\mat{Z}^{(k, m)}}_{: r}}.
        \end{split}
    \end{split}
\end{equation*}
By applying Lemmas~\ref{lemma_1} and~\ref{lemma_2}, together with Equation~\eqref{new_a} and element-wise notation, the second term of Equation~\eqref{second_general_derivative} can be expressed as follows:
\begin{equation}
    \dfrac{\partial \sqBr{\mat{A}^{(k)\top} \vect{y}}_{(ir)} }{\partial v^{(m)}_{(jp)}} = \delta_{rp}\sqBr{\mat{\Phi}^{(k)\top}\mat{H}^{(k, m, r)}\mat{\Phi}^{(m)}}_{ij},
\end{equation}
where $\mat{H}^{(k, m, r)} \coloneq \diagOp{\vect{y}}\diagOp{\sqBr{\mat{Z}^{(k, m)}}_{:r}}$.

To conclude, we define new matrices as combinations of the previously introduced ones, leading to the final equations:
\begin{equation*}
    \begin{split}
        \mat{T}^{(k, m, r, p)} &= \mat{D}^{(k, m, r, p)} + \delta_{rp} \mat{E}^{(k, m, r)},\\
        \mat{E}^{(k, m, r)} &= \mat{G}^{(k, m, r)} - \mat{H}^{(k, m, r)}
        = \diagOp{\vect{f} - \vect{y}} \diagOp{\sqBr{\mat{Z}^{(k, m)}}_{:r}}.
    \end{split}
\end{equation*}
\end{proof}

\subsection{Variational Inference and Hyperprior}\label{vi_hyperprior}
Within a fully Bayesian framework, it is standard to place prior distributions over the unknown hyperparameters — specifically, the precision terms in our case. Following the approach outlined in~\citep{ML_Bishop_2006}, we model these hyperparameters using Gamma distributions:
\begin{equation}\label{prior_gamma_beta}
p(z) = \operatorname{Gam}(a^{(0)}_z, b^{(0)}_z), z\in\{\beta, \gamma \}.
\end{equation}

With these components in place, we are now able to define the variational posterior distribution under the mean-field assumption:
\begin{equation}\label{q_var_2}
q(\vect{v}, \beta, \gamma) = q(\vect{v})q(\beta)q(\gamma).
\end{equation}

To determine the functional forms of the variational factors, we sequentially employ the standard coordinate ascent update rule~\citep{ML_Bishop_2006}:
\begin{equation}\label{alt_bishop}
    \ln{q(\vect{z}_j)} = \mathop{\mathbb{E}}_{i \neq j}\left[\ln p(y, \vect{z})\right] + C\text{, } j=1, 2, 3,
\end{equation}
where $\vect{z} \coloneq [\vect{z_1}, z_2, z_3]$, with $\vect{z}_1 \coloneq \vect{v}$, $z_2 \coloneq \beta$, $z_3 \coloneq \gamma$; $p(y, \vect{z}) = p(y|\vect{v}, \beta)p(\vect{v}|\gamma)p(\beta)p(\gamma)$ is the joint distribution; $q(\vect{z}_j)$ denotes the optimal variational distribution for the $j$-th variable, and the expectation is taken over all other variables. 

The application of Equation~\eqref{alt_bishop} to the noise precision - $\beta$ leads to the following parameter update for its variational Gamma posterior for $t = 1, \dots, T$:
\begin{equation}\label{noise_precision}
    \begin{split}
    q(\beta) &= Gam(\beta|a_{\beta}^{(t)}, b_{\beta}^{(t)}), \\
    a_{\beta}^{(t)} &\coloneq a_{\beta}^{(t-1)} + \frac{N}{2}\text{, }\\
    b_{\beta}^{(t)} &\coloneq b_{\beta}^{(t-1)} + \frac{1}{2}\mathbb{E}_{\vect{v}}{\|\vect{y} -\mat{\Phi}\vect{g}(\vect{v})\|_2^2}.
    \end{split}
\end{equation}
A similar line of reasoning yields the variational update of the Gamma posterior parameters for the weights precision $\gamma$:
\begin{equation}\label{weights_precision}
    \begin{split}
    q(\gamma) &= Gam(\gamma|a_{\gamma}^{(t)}, b_{\gamma}^{(t)}), \\
    a_{\gamma}^{(t)} &\coloneq a_{\gamma}^{(t-1)} + \frac{DIR}{2}, \\
    b_{\gamma}^{(t)} &\coloneq b_{\gamma}^{(t-1)} + \frac{1}{2}\mathbb{E}{\vect{v}^\top\vect{v}}.
    \end{split}
\end{equation}
The posterior distribution over the model weights $\vect{v}$ can be obtained by applying Equations~\eqref{laplace} and~\eqref{loss}, where the precision hyperparameters $\beta$ and $\gamma$ are replaced with their expected values, $\mathbb{E}{\beta}$ and $\mathbb{E}{\gamma}$, respectively.


\section{ADDITIONAL EXPERIMENTAL RESULTS}\label{app:exp_res}

\subsection{Details of Computational Resources}
All experiments were conducted on a Dell Inc. Latitude 7440 laptop equipped with a 13th Gen Intel Core i7-1365U CPU and 16 GB of RAM.

\subsection{Details of Baseline Methods}\label{app:baselines}
\paragraph{LA-BNN.}
The model is a standard multilayer perceptron (MLP) with a single hidden layer and tanh activation~\citep{PBP_2015}, trained using the Adam optimizer~\citep{ADAM_Ba_2017}. The number of hidden units depends on the task: 32 for the synthetic experiments and 100 for the UCI datasets. To obtain probabilistic predictions, we apply the linearized Laplace approximation~\citep{BNN_LLA_Immer_2021}, treating all network parameters as random variables and thus employing the full-Hessian approximation.

\subsection{Evaluation Metrics}\label{app:metrics}

We evaluate both predictive accuracy and uncertainty quality using the following regression metrics.
Given inputs $\{\vect{x}_i\}_{i=1}^N$, targets $\{y_i\}_{i=1}^N$, predictive means $\{\hat{y}_i\}_{i=1}^N$, and predictive distributions $p(y|\vect{x})$, we define:

\paragraph{Root Mean Squared Error (RMSE).}
The RMSE measures point prediction accuracy and is defined as
\begin{equation}
\mathrm{RMSE} = \sqrt{\frac{1}{N} \sum_{i=1}^{N} (y_i - \hat{y}_i)^2}.
\end{equation}
Lower values indicate better predictive accuracy.

\paragraph{Negative Log-Likelihood (NLL).}
To assess probabilistic predictions, we compute the negative log-likelihood metric~\citep{UQM_2024},
\begin{equation}
\mathrm{NLL} = -\frac{1}{N} \sum_{i=1}^{N} \log p(y_i | \vect{x}_i),
\end{equation}
where $p(y_i| \vect{x}_i)$ denotes the predictive distribution.
This metric jointly evaluates accuracy and uncertainty calibration, with lower values indicating better performance.
It is also commonly referred to as the negative log predictive density (NLPD).

For a Gaussian predictive distribution, the negative log-likelihood is given by
\begin{equation}
\mathrm{NLL} = \frac{1}{N} \sum_{i=1}^{N}
\frac{1}{2} \log \big( 2\pi \sigma_i^2 \big)
+ \frac{(y_i - \hat{y}_i)^2}{2\sigma_i^2},
\end{equation}
where $\sigma_i^2$ denotes the predictive variance for a single test point.

\paragraph{Empirical Coverage Probability (ECP-{95}).}
The ECP-95 measures the fraction of targets lying within the predicted 95\% credible intervals~\citep{UQ_DL_2018},
\begin{equation}
\text{ECP-95} = \frac{1}{N} \sum_{i=1}^{N} \mathbb{I}
\left[ y_i \in \left[ l_i^{95}, u_i^{95} \right] \right].
\end{equation}
where $\mathbb{I}[\cdot]$ denotes the indicator function, which evaluates to $1$ if its argument is true and $0$ otherwise.
The quantities $l_i^{95}$ and $u_i^{95}$ denote the lower and upper bounds of the 95\% predictive credible interval for the $i$-th test point.
Values of ECP-95 closer to $0.95$ indicate better calibrated uncertainty.

In practice, the interval $\left[ l_i^{95}, u_i^{95} \right]$ is obtained empirically as the 2.5th and 97.5th percentiles of the predictive samples.

\paragraph{Width of Credible Prediction Interval (WCPI-95).}
The WCPI-95 quantifies the average width of the 95\% predictive intervals~\citep{Density_Reg_UQ_2023},
\begin{equation}
\text{WCPI-95} = \frac{1}{N} \sum_{i=1}^{N} (u_i^{95} - l_i^{95}),
\end{equation}
where $l_i^{95}$ and $u_i^{95}$ denote the lower and upper bounds of the 95\% predictive credible interval for the $i$-th test point.
Smaller values indicate sharper (less conservative) predictive uncertainty.

In practice, the values $l_i^{95}$ and $u_i^{95}$ are obtained empirically as the 2.5th and 97.5th percentiles of the predictive samples.

\paragraph{Relative Calibration Error (RCE).}
The relative calibration error measures the deviation between nominal and empirical coverage across multiple confidence levels~\citep{UQ_DL_2018}.
It is defined as
\begin{equation}
\mathrm{RCE} = \frac{1}{K} \sum_{k=1}^{K} \left| \hat{c}(\alpha_k) - \alpha_k \right|,
\end{equation}
where $K$ is the number of nominal coverage levels considered, $\alpha_k$ denotes the $k$-th nominal coverage level (e.g., 0.5, 0.6, ..., 0.95), and $\hat{c}(\alpha_k)$ is the corresponding empirical coverage computed over the test set.

Empirically, $\hat{c}(\alpha_k)$ can be obtained in the same way as ECP: for each test point $i$, compute the $\alpha_k$ predictive interval $[l_i^{\alpha_k}, u_i^{\alpha_k}]$ and evaluate the fraction of targets lying inside these intervals.
Lower RCE values indicate better overall calibration across the considered confidence levels. For reference, an RCE below 0.02--0.03 is typically considered well-calibrated, values around 0.05 indicate reasonable calibration, and values above 0.1 suggest poor calibration and under- or over-confident predictive intervals.

\subsection{Ablation Study}\label{app:ablation}
In Table~\ref{table:uci-hessian}, we compare several Hessian approximations for LA-TNKM, including the generalized Gauss–Newton (GGN), block-diagonal (Block), and last-core (Last) approximations, across UCI regression datasets. The results show that using the last-core Hessian consistently leads to improved data-fitting performance (lower RMSE) and better uncertainty estimation (lower NLL and RCE).

The superior performance of the last-core approximation suggests that concentrating curvature information on a subset of parameters can effectively capture the essential posterior structure, while reducing approximation error and numerical instability associated with more global or factorized Hessian estimates. This observation aligns with the findings of~\citet{BNN_Fully_Stochastic_2023}, which demonstrate that modeling uncertainty in only a subset of layers is often enough to obtain well-calibrated and competitive Bayesian predictions.

\begin{table*}[!t]
\caption{Average test RMSE, NLL, and RCE on several UCI regression datasets. Models are trained on 90\% of the data and tested on the remaining 10\%. Each experiment is repeated 10 times, and we report mean ± standard deviation for three Hessian matrix approximations for LA-TNKM: GNN, Block, and Last. }
\label{table:uci-hessian}
\begin{center}
\scalebox{0.75}{
\begin{tabular}{||l||c|c|c||c|c|c||c|c|c||}
\toprule
 & \multicolumn{3}{c||}{RMSE $\downarrow$} & \multicolumn{3}{c||}{NLL $\downarrow$} & \multicolumn{3}{c||}{RCE $\downarrow$} \\
 \midrule
 & GNN & Block & Last & GNN & Block & Last & GNN & Block & Last \\
Dataset &  &  &  &  &  &  &  &  &  \\
\midrule
BOSTON & 0.70±0.17 & 0.70±0.17 & \textbf{0.63±0.11} & 0.97±0.12 & 0.99±0.12 & \textbf{0.95±0.16} & 0.09±0.03 & 0.11±0.03 & \textbf{0.05±0.02} \\
CONCRETE & 0.63±0.04 & 0.64±0.04 & \textbf{0.55±0.05} & 0.98±0.06 & 1.01±0.05 & \textbf{0.82±0.09} & 0.06±0.02 & 0.07±0.02 & \textbf{0.03±0.02} \\
ENERGY & 0.44±0.05 & 0.44±0.05 & \textbf{0.05±0.01} & 0.62±0.09 & 0.64±0.07 & \textbf{-1.40±0.04} & \textbf{0.06±0.02} & 0.08±0.03 & 0.20±0.02 \\
KIN8NM & 0.58±0.02 & 0.56±0.05 & \textbf{0.39±0.01} & 0.88±0.02 & 0.86±0.09 & \textbf{0.48±0.02} & 0.04±0.01 & 0.05±0.01 & \textbf{0.02±0.01} \\
NAVAL & \textbf{0.01±0.00} & 0.11±0.03 & 0.08±0.04 & \textbf{-1.96±0.05} & -0.61±0.18 & -1.16±0.64 & 0.44±0.03 & 0.26±0.03 & \textbf{0.23±0.03} \\
POWER & 0.31±0.01 & 0.31±0.01 & \textbf{0.24±0.01} & 0.24±0.03 & 0.24±0.02 & \textbf{-0.01±0.03} & \textbf{0.03±0.01} & \textbf{0.03±0.01} & \textbf{0.03±0.01} \\
PROTEIN & 0.72±0.01 & 0.72±0.01 & \textbf{0.70±0.01} & 1.08±0.01 & 1.09±0.01 & \textbf{1.06±0.01} & 0.05±0.01 & 0.06±0.01 & \textbf{0.04±0.01} \\
RED WINE & 0.86±0.07 & 0.86±0.07 & \textbf{0.84±0.06} & 1.28±0.06 & 1.29±0.06 & \textbf{1.24±0.06} & 0.08±0.01 & 0.08±0.01 & \textbf{0.04±0.02} \\
YACHT & 0.24±0.15 & 0.13±0.12 & \textbf{0.13±0.03} & 0.02±0.36 & 0.07±0.18 & \textbf{-0.52±0.17} & 0.20±0.06 & 0.35±0.09 & \textbf{0.13±0.06} \\
\bottomrule
\end{tabular}
}
\end{center}
\end{table*}

\subsection{Uncertainty on Synthetic Data}\label{app:synthetic}
In Table~\ref{table:x-cube-comparison}, which complements the results shown in Figure~\ref{fig:x3}, we report RMSE, NLL, ECP-95, WCPI-95, and RCE metrics on test for several probabilistic models: LA-BNN, SP-BTN, GP, LA-TNKM, and mean-field Bayesian tensor network (MF-BTN)~\citep{SP_BTN_Mandic_2022}. The results further indicate that LA-TNKM most closely matches the behavior of Gaussian Processes, exhibiting fewer signs of miscalibration (mismatch between nominal and empirical uncertainty) and model misspecification (systematic deviations from the true data-generating process) compared to the competing approaches.

\begin{table*}[!t]
\caption{Comparison of predictive performance and uncertainty calibration on the cubic regression task ($y = x^3 + \epsilon $, where $\epsilon \sim \mathcal{N}(0, 3^2)$) for LA-BNN, MF-BTN, SP-BTN, GP, and the proposed LA-TNKM. Models are trained on 20 data points and evaluated on 100 test points. Metrics reported include root mean squared error (RMSE), negative log-likelihood (NLL), empirical coverage at 95\% (ECP-95), width of the 95\% prediction interval (WCPI-95), and regression calibration error (RCE). Among the evaluated models, LA-TNKM exhibits predictive behavior more consistent with Gaussian process regression, whereas the other baselines show signs of miscalibration or misspecification.}
\label{table:x-cube-comparison}
\begin{center}
\begin{tabular}{||l||c|c|c|c|c||}
\toprule
MODEL & RMSE $\downarrow$ & NLL $\downarrow$ & ECP-95 & WCPI-95 $\downarrow$ & RCE $\downarrow$ \\
\midrule
LA-BNN & 16.727 & 4.226 & 0.750 & 17.447 & 0.112 \\
MF-BTN & 4.883 & 3.730 & 1.000 & 99.559 & 0.289 \\
SP-BTN & 6.202 & 3.257 & 0.970 & 40.168 & 0.097 \\
GP & 3.231 & 2.610 & 0.940 & 13.496 & 0.031 \\
LA-TNKM & 4.097 & 2.802 & 0.960 & 17.202 & 0.032 \\
\bottomrule
\end{tabular}
\end{center}
\end{table*}

\subsection{UCI Regression}\label{app:uci-reg}
Tables~\ref{table:uci-rmse-rce} and~\ref{table:uci-ecp-wcpi} report the average test RMSE, ECP-95, WCPI-95, and RCE for four methods: LA-BNN, MF-BTN, SP-BTN, and the proposed LA-TNKM, while Table~\ref{table:uci-comparison} summarizes the corresponding NLL results. Overall, the proposed LA-TNKM model demonstrates consistently superior uncertainty calibration, outperforming the competing approaches in terms of ECP-95 and RCE across most datasets. This indicates that LA-TNKM produces predictive uncertainties that more accurately reflect the true uncertainty in the data. Furthermore, the WCPI-95 metric demonstrates that LA-TNKM achieves lower predictive interval widths, indicating sharper uncertainty estimates. This suggests that the model balances calibration and precision, avoiding unnecessarily conservative predictions while retaining reliable coverage compared to MF-BTN and SP-BTN baselines.

Although LA-BNN attains lower RMSE and strong pointwise accuracy, it shows weaker uncertainty calibration, with higher RCE and less reliable credible interval coverage. This suggests that, in this setting, its uncertainty estimates may be less suitable for probabilistic modeling.

\begin{table*}[!t]
\caption{Average test RMSE and RCE on several UCI regression datasets. We train on random 90\% of the data and predict on 10\%. Each experiment is repeated 10 times, and we report the mean ± standard deviation. }
\label{table:uci-rmse-rce}
\begin{center}
\scalebox{0.85}{
\begin{tabular}{||l||c|c|c|c||c|c|c|c||}
\toprule
 & \multicolumn{4}{c||}{RMSE $\downarrow$} & \multicolumn{4}{c||}{RCE $\downarrow$} \\
 \midrule
 & LA-BNN & MF-BTN & SP-BTN & LA-TNKM & LA-BNN & MF-BTN & SP-BTN & LA-TNKM \\
Dataset &  &  &  &  &  &  &  &  \\
\midrule
BOSTON & \textbf{0.20±0.06} & 1.02±0.11 & 0.77±0.13 & 0.63±0.11 & 0.46±0.02 & 0.05±0.03 & 0.13±0.04 & \textbf{0.05±0.02} \\
CONCRETE & \textbf{0.21±0.05} & 0.99±0.08 & 0.47±0.05 & 0.55±0.05 & 0.43±0.02 & \textbf{0.03±0.01} & 0.10±0.03 & 0.03±0.02 \\
ENERGY & \textbf{0.04±0.01} & 0.98±0.04 & 0.43±0.05 & 0.05±0.01 & 0.47±0.00 & 0.10±0.01 & \textbf{0.07±0.03} & 0.20±0.02 \\
KIN8NM & \textbf{0.25±0.01} & 1.00±0.01 & 0.57±0.02 & 0.39±0.01 & 0.39±0.01 & \textbf{0.02±0.01} & 0.07±0.01 & \textbf{0.02±0.01} \\
NAVAL & \textbf{0.04±0.03} & 1.00±0.01 & 0.87±0.04 & 0.08±0.04 & 0.50±0.00 & 0.08±0.01 & \textbf{0.04±0.02} & 0.23±0.03 \\
POWER & \textbf{0.22±0.01} & 1.00±0.02 & 0.41±0.01 & 0.24±0.01 & 0.37±0.01 & 0.06±0.01 & 0.09±0.01 & \textbf{0.03±0.01} \\
PROTEIN & \textbf{0.66±0.02} & 1.00±0.01 & 0.92±0.01 & 0.70±0.01 & \textbf{0.03±0.00} & 0.10±0.00 & 0.05±0.00 & 0.04±0.01 \\
RED WINE & \textbf{0.61±0.13} & 0.93±0.06 & 0.82±0.06 & 0.84±0.06 & 0.40±0.04 & 0.10±0.01 & 0.07±0.01 & \textbf{0.04±0.02} \\
YACHT & \textbf{0.04±0.06} & 0.95±0.11 & 0.40±0.23 & 0.13±0.03 & 0.47±0.02 & \textbf{0.11±0.02} & 0.20±0.11 & 0.13±0.06 \\
\bottomrule
\end{tabular}
}
\end{center}
\end{table*}

\begin{table*}[!t]
\caption{Average test ECP-95 and WCPI-95 on several UCI regression datasets. We train on random 90\% of the data and predict on 10\%. Each experiment is repeated 10 times, and we report the mean ± standard deviation. 'N/A' refers to cases where reporting a value is not feasible due to prohibitively large computational demands or lack of numerical stability. }
\label{table:uci-ecp-wcpi}
\begin{center}
\scalebox{0.85}{
\begin{tabular}{||l||c|c|c|c||c|c|c|c||}
\toprule
 & \multicolumn{4}{c||}{ECP-95} & \multicolumn{4}{c||}{WCPI-95 $\downarrow$} \\
 \midrule
 & LA-BNN & MF-BTN & SP-BTN & LA-TNKM & LA-BNN & MF-BTN & SP-BTN & LA-TNKM \\
Dataset &  &  &  &  &  &  &  &  \\
\midrule
BOSTON & 1.00±0.00 & \textbf{0.93±0.03} & 0.98±0.02 & 0.90±0.04 & 14.99±2.71 & 3.90±0.03 & 3.83±0.13 & \textbf{2.07±0.40} \\
CONCRETE & 1.00±0.00 & \textbf{0.95±0.02} & 0.98±0.02 & \textbf{0.95±0.02} & 7.65±0.50 & 3.92±0.02 & 2.41±0.03 & \textbf{2.22±0.20} \\
ENERGY & 1.00±0.00 & \textbf{0.98±0.02} & \textbf{0.98±0.02} & 0.99±0.01 & 5.77±0.15 & 3.94±0.03 & 2.09±0.04 & \textbf{0.32±0.00} \\
KIN8NM & 1.00±0.00 & 0.96±0.00 & 0.97±0.01 & \textbf{0.95±0.01} & 5.15±0.09 & 3.89±0.01 & 2.68±0.00 & \textbf{1.55±0.02} \\
NAVAL & N/A±0.00 & 1.00±0.00 & 1.00±0.00 & \textbf{0.98±0.02} & N/A±0.00 & 3.86±0.01 & 4.27±0.07 & \textbf{0.43±0.19} \\
POWER & 1.00±0.00 & 0.99±0.00 & 0.98±0.00 & \textbf{0.96±0.01} & 3.61±0.08 & 4.12±0.10 & 2.02±0.01 & \textbf{0.97±0.01} \\
PROTEIN & 0.92±0.01 & 0.98±0.00 & 0.98±0.00 & \textbf{0.94±0.00} & \textbf{2.41±0.02} & 3.96±0.00 & 4.12±0.02 & 2.74±0.01 \\
RED WINE & 1.00±0.00 & 0.97±0.02 & 0.96±0.02 & \textbf{0.95±0.02} & 13.09±1.83 & 3.88±0.02 & 3.76±0.04 & \textbf{3.38±0.14} \\
YACHT & 1.00±0.00 & 0.93±0.02 & 0.98±0.03 & \textbf{0.97±0.03} & 5.94±0.12 & 3.92±0.04 & 2.24±0.43 & \textbf{0.70±0.05} \\
\bottomrule
\end{tabular}
}
\end{center}
\end{table*}

\end{document}